\documentclass{article}

\PassOptionsToPackage{round}{natbib}
\usepackage[preprint]{neurips_2026}

\usepackage[utf8]{inputenc}
\usepackage[T1]{fontenc}

\usepackage{hyperref}
\usepackage{url}
\usepackage{booktabs}
\usepackage{amsfonts}
\usepackage{amsmath,amssymb,amsthm}
\usepackage{aliascnt}
\usepackage{nicefrac}
\usepackage{microtype}
\usepackage{xcolor}
\usepackage{graphicx}
\usepackage{enumitem}

\newtheorem{theorem}{Theorem}[section]
\newaliascnt{proposition}{theorem}
\newtheorem{proposition}[proposition]{Proposition}
\aliascntresetthe{proposition}
\newaliascnt{corollary}{theorem}
\newtheorem{corollary}[corollary]{Corollary}
\aliascntresetthe{corollary}
\newaliascnt{lemma}{theorem}

\aliascntresetthe{lemma}
\newaliascnt{definition}{theorem}
\newtheorem{definition}[definition]{Definition}
\aliascntresetthe{definition}

\newaliascnt{assumption}{theorem}

\aliascntresetthe{assumption}
\newaliascnt{remark}{theorem}
\newtheorem{remark}[remark]{Remark}
\aliascntresetthe{remark}
\newaliascnt{example}{theorem}
\newtheorem{example}[example]{Example}
\aliascntresetthe{example}
\usepackage{cleveref}
\crefname{proposition}{Proposition}{Propositions}
\Crefname{proposition}{Proposition}{Propositions}
\crefname{corollary}{Corollary}{Corollaries}
\Crefname{corollary}{Corollary}{Corollaries}
\crefname{lemma}{Lemma}{Lemmas}
\Crefname{lemma}{Lemma}{Lemmas}
\crefname{definition}{Definition}{Definitions}
\Crefname{definition}{Definition}{Definitions}
\crefname{remark}{Remark}{Remarks}
\Crefname{remark}{Remark}{Remarks}
\crefname{example}{Example}{Examples}
\Crefname{example}{Example}{Examples}
\crefname{axiom}{Axiom}{Axioms}
\Crefname{axiom}{Axiom}{Axioms}
\crefname{assumption}{Assumption}{Assumptions}
\Crefname{assumption}{Assumption}{Assumptions}

\newcommand{\R}{\mathbb{R}}

\newcommand{\GL}{\mathrm{GL}}

\begin{document}

\title{Labeled Incidence Structures for Native Transformer Modeling of Text, Knowledge Graphs, and Hypergraphs}

\author{%
  Mahesh Godavarti \\
  A Carrot, Inc. \\
  \texttt{m@acarrot.com}
}

\maketitle

\begin{abstract}
Current Transformer interfaces index tokens by one or more integer coordinates, which determine their addresses inside attention.
In RoPE and its multi-axis or hierarchical variants, the resulting address has the form $A(i)=R_1^{i_1}R_2^{i_2}R_3^{i_3}$, where the exponents are integer coordinates assigned after choosing a serialized token layout.
When Transformers process new or large collections of data, this addressing scheme can produce unseen offsets or coordinate combinations, push repositories toward retrieve-and-serialize pipelines, and force new entities, records, or repository items to be represented by long token strings or identifier embeddings not seen in training.

We introduce \emph{labeled incidence structures} (LIS), in which each participating token or value is an endpoint with content $x$ and a structural index $i$.
The index can include local position, relation role, relation instance, text unit, field, or content-derived identity.
The model maps this index to a structural address $A(i)$, so adding new tokens, facts, text units, or repository items applies the same learned address rule to structural and content coordinates rather than requiring larger integer coordinates, unseen coordinate combinations, or new identifier embeddings.
Attention scores endpoints $i,j$ using $q_i^\top P_{j\to i}k_j$, where journey consistency forces $P_{j\to i}=A(i)^{-1}A(j)$.
When $i$ has several coordinates, such as position, role, and instance, coordinate independence is equivalent to factoring $A(i)$ into one address factor per coordinate.
This recovers RoPE, RoPE-2D, and HiRoPE as special cases.
This allows knowledge-graph (KG) roles, fact instances, and text units to enter the attention score directly.

In controlled shallow diagnostics, the LIS address interface is implemented inside ordinary Transformer attention and yields promising results across text, KG, and $n$-ary settings.
\end{abstract}


\section{Introduction}\label{sec:intro}

\noindent\textbf{Foundation models need a way to take structured inputs directly.}
Foundation-model corpora already include structured sources: code, tables, web markup, knowledge graphs (KGs) such as Wikidata, biomedical resources such as UMLS and DrugBank, JSON records, citation graphs, product catalogs, logs, and scientific records.
These data have roles, fields, tree structure, and shared instance identity, but current practice often flattens them into ordinary token sequences before attention: triples are verbalized, tables are linearized, $n$-ary records are serialized, and structural identifiers are added as embeddings or biases.
This paper argues that roles, relation instances, and positions should enter the attention score directly, rather than being flattened first and then recovered from token content and serialized position.

\noindent\textbf{The scale problem: positional coordinates grow with context.}
Integer coordinates assigned to tokenized inputs grow with context, whether they are 1D, multi-axis, or hierarchical.
Growth then shows up as larger offsets, longer identifiers, or the need to retrieve only a subset.
Positional-coordinate workarounds such as Attention with Linear Biases (ALiBi), position interpolation, YaRN, and randomized positional encodings~\citep{press2022alibi,chen2023positioninterpolation,peng2023yarn,ruoss2023randomized}, as well as position resets, masking, and retrieve-and-serialize pipelines, all try to keep useful interactions inside the trained range or make unseen offsets less harmful: by downweighting, rescaling, randomizing, resetting positions within segments, or retrieving into a short context.
\Cref{sec:scale} treats these as one bottleneck: $P$ is computed from the coordinates of a serialized layout, so useful information must stay within the coordinate ranges and combinations seen during training.

\noindent\textbf{The structural problem: facts are forced through token serialization.}
Structured records contain more than values.
They also specify which role each value plays and which values belong to the same relation instance.
In a KG triple $(\text{Paris},\; \texttt{capital\_of},\; \text{France})$, Paris and France are not just two nearby tokens: they occupy the \textsc{head} and \textsc{tail} roles of one \texttt{capital\_of} fact.
A sentence ``Paris is the capital of France'' gives the same entities positions in a tokenized string.
An $n$-ary fact, such as a drug--gene--disease interaction $(\text{Aspirin},\;\text{COX-2},\;\text{Inflammation})$, also has a shared relation instance: the \textsc{drug}, \textsc{target}, and \textsc{disease} participants co-occur in one event.

Common ways of incorporating structured data into text Transformers so it can be accessed through natural language, including role-marked strings, additive tags, scalar score biases, learned identifier embeddings, and binary reductions of $n$-ary facts, place role and instance information in token content, embeddings, value messages, or scalar bias terms, but not in the pairwise operator $P$ that multiplies the key. Appendix~\ref{app:additive-embeddings} makes this restriction precise.

\noindent\textbf{One representation for text, KGs, and hyperedges.}
We introduce labeled incidence structures (LIS), a representation in which each endpoint keeps its content, its role, and its relation instance, and in which these three parts are turned directly into the address used by the attention score, without passing through a serialized layout.
Text tokens, KG endpoints, and $n$-ary hyperedge endpoints can all be written in this form.
Under the coordinate-independence conditions stated later, ordinary text position can also be included as its own factor, giving forms such as $A(p,s,e)=R_{\mathrm{pos}}^pR_sR_e$.
Each endpoint receives a structural address $A(i)\in G$, an operator playing the role that the rotation $R^t$ plays in RoPE.
Ordinary attention compares two endpoints through the relative operator between their addresses. We call this the journey operator (\Cref{ssec:journey}).
For standard RoPE this operator is the relative rotation $R^{j-i}$. LIS generates it from arbitrary structural indices rather than only integer offsets.
Thus the attention score receives role, instance, and position structure through the operator $P_{j\to i}=A(i)^{-1}A(j)$, rather than through integer coordinates assigned to a token layout whose relative offsets change with packing and context length.
LIS does not require turning text into structured data: ordinary text remains a special case of the same interface.
The point is that already-structured sources can enter Transformer attention with their roles, instances, and positions still explicit.

\noindent\textbf{Scale and alignment.}
By a repository we mean a large external store of facts, records, chunks, or cached keys.
At that scale, each entry's operator should be computed by a shared rule, not stored as a new parameter per entry.
In LIS, a repository entry can be treated as an instance $e$ whose operator is generated from content, for example $R_e=\varphi_\theta(\mathrm{content}_e)$, where $\varphi_\theta$ is a shared parametric map from entry content to an operator. Here content generates the operator used in the attention score, not a similarity key for selecting entries.
New entries are then addressed by the same learned rule, with no new parameters and no serialization.
When content is shared across text and structured sources, the content map $\tau$ from token occurrences to content identifiers, defined in \Cref{def:lis}, assumes that this alignment is available. Curated resources often provide it, while open-web alignment is an entity-linking problem outside this paper's scope.

\noindent\textbf{Contributions.}
This is a theory-oriented representation paper.
We make three main contributions:
\begin{enumerate}[nosep]
\item \textbf{A native structural address interface for scale} (\Cref{sec:scale,sec:framework}): text length, repositories, tables, code, KG facts, role markers, and structural identifiers all fail the same way: their coordinates are assigned to a serialized layout. LIS keeps endpoint content together with structural coordinates. With factored addresses, new data adds endpoints whose address comparisons reuse the operator factors learned in training.
\item \textbf{Operator properties give the unified address form} (\Cref{sec:framework}; \Cref{thm:closed-loop-role-algebra,prop:lis-special-cases,prop:coordinate-independence-commutation}): consistent journey operators give $P_{j\to i}=A(i)^{-1}A(j)$. Coordinate independence gives factorized addresses. Applying it to successive coordinate splits yields forms such as $A(p,s,e)=R_{\mathrm{pos}}^pR_sR_e$. RoPE, RoPE-2D, and HiRoPE are special cases. If the same independence is also required after reversing the product order, the corresponding factors must commute. \Cref{prop:coordinate-independence-commutation} states the exact condition for the role--instance case.
\item \textbf{Shallow diagnostics} (\Cref{sec:diagnostics}; Appendix~\ref{app:diagnostic-details}): controlled character-level experiments test whether the address mechanism can be implemented for ordinary text as well as KG/text conversion. A Tiny Shakespeare diagnostic treats text units as LIS instances using LIS-based attention, whose RoPE-style address for a character combines a within-line position factor with a content-derived line-instance factor. Validation perplexity is approximately constant from context 256 to 4096, showing that ordinary attention can compute and use this address. Synthetic KG/text diagnostics compare a role-marked serialized baseline against LIS variants with role-addressed KG arguments, including independent KG/text transfer, template-diverse KG$\to$text generation with content-derived fact addresses, and zero-shot stress tests on unseen names.
\end{enumerate}


\section{The Positional Bottleneck}\label{sec:scale}

Serialized positional coordinates must represent ordinary token order, text length, fact boundaries, KG roles, repository identity, table fields, and arbitrary packing order.
This causes two problems, whether the coordinates are 1D, multi-axis, or hierarchical.
Throughout, growth means any of the following: larger offsets in a flat coordinate, unseen values or unseen combinations in a multi-axis or hierarchical coordinate, or new entries requiring new identifiers.
Out of distribution, growth produces operators $P$ never seen in training: larger offsets in the flat case, unseen coordinate values or combinations in the multi-axis and hierarchical cases.
In distribution, $P$ has been seen in training, but its value depends on the serialization order rather than on a structural relation between the endpoints.
The common object in all of these cases is the operator $P$ inside each pairwise attention score,
\[
q^\top P k .
\]

\noindent\textbf{Out-of-distribution: length and support.}
Let a flat positional interface assign $A_{\mathrm{flat}}(t)=R_{\mathrm{pos}}^t$ to token position $t$, where $R_{\mathrm{pos}}$ is the one-token RoPE step operator.
If every training context has length at most $L_{\mathrm{train}}$, then every relative positional operator seen in training lies in
\[
\mathcal{P}_{\mathrm{train}}
=
\{R_{\mathrm{pos}}^\delta: |\delta|\leq L_{\mathrm{train}}-1\}.
\]
In a flat sequence of length $L$, the relative operator from position $j$ to position $i$ is
\[
A_{\mathrm{flat}}(i)^{-1}A_{\mathrm{flat}}(j)=R_{\mathrm{pos}}^{j-i}.
\]
Training therefore exposes only offsets $|\delta|\leq L_{\mathrm{train}}-1$.
At test length $L_{\mathrm{test}}>L_{\mathrm{train}}$, a relation can occur at an offset $|\delta|>L_{\mathrm{train}}-1$.
As contexts or repositories grow, these offsets can move arbitrarily far outside the trained region.
For standard RoPE's one-radian block, distinct integer offsets give distinct operators, so these are new operators outside $\mathcal{P}_{\mathrm{train}}$.
Serialization stays in-distribution only while all relevant coordinate offsets remain within the ranges seen in training.

\noindent\textbf{Out-of-distribution: context budget.}
A serialized KG, table, code object, or repository must fit into the finite token context before the Transformer can attend to it.
If we serialize more entries, their positional offsets can leave the ranges seen during training.
If we retrieve only a subset, the omitted entries are unavailable to the Transformer.

\noindent\textbf{Out-of-distribution: identifier budget.}
Learned identifier embeddings keep entries short only by assigning a parameter to each identifier.
A growing repository then needs new identifiers outside the trained table, collisions, or long serialized identifiers over a fixed vocabulary.
Long serialized identifiers are content strings, not structural addresses: they consume context, introduce more positions, and force the model to link the identifier to its entry through attention over the identifier's tokens.
The issue is the same as for long offsets: scale introduces offsets and identifiers the model never saw in training.

\noindent\textbf{In-distribution: packing dependence.}
When facts, text chunks, table rows, or code nodes are concatenated, their relative positions depend on arbitrary packing order.
Two entries can become near or far because a serializer placed them that way, not because the underlying structure says they have that relation.
The positional coordinate then encodes both the structure and an arbitrary layout choice.

\noindent\textbf{In-distribution: role markers, tags, and biases.}
Even when all serialized offsets lie inside the range of positions seen in training, common text-Transformer encodings such as role markers, additive tags, and scalar score biases put structural information in token content, value messages, embeddings, or scalar score terms rather than in the pairwise journey operator between endpoints.
For these encodings, the pairwise operator $P$ is still computed either from positions in the serialized layout or from entries in a learned label table, so the structural information added by markers, tags, and biases never reaches the operator that compares the two endpoints.
\Cref{thm:serialization-fragmentation} formalizes the positional part of this critique for role-marked structured facts: with variable entity lengths, the same semantic role pair appears as an instance-dependent family of serialized offsets, so no role--instance address factorization can be assigned to the serialized layout.

The key question is how $P$ is computed from endpoint coordinates, one integer or several, since that determines whether test-time operators stay in the trained family.

Flat position produces $P=R_{\mathrm{pos}}^\delta$ from a counter or serialized distance $\delta$. Growth can make test-time offsets, and hence test-time $P$, leave the trained family.
Packing creates the in-distribution variant of the same problem: the offset may be seen, but it is an artifact of serialization rather than a structural coordinate.
The same point extends to multi-coordinate rotary schemes such as RoPE-2D~\citep{heo2024ropevit} or HiRoPE~\citep{zhang2024hirope}: they split positional coordinates across axes or hierarchy levels, but $P$ is still generated from integer coordinates whose test-time values or combinations may be unseen during training.
Distance-biased schemes such as ALiBi handle large offsets by rule, but $P$ still depends on serialized distance, which packing order sets arbitrarily.
Learned identifier embeddings produce arbitrary operators $\Phi_j$ from labels $j$. A new label gives an unconstrained test-time operator.
Content-derived LIS addresses instead use a rule such as $R_e=\varphi_\theta(\mathrm{content}_e)$.

A retrieval embedding only selects entries. Attention still receives a serialized sequence with positional addresses.
LIS instead uses the content-derived operator inside $P$ itself.
Appendix~\ref{app:address-regime-score-facts} analyzes the same issue in terms of the attention score $q^\top Pk$. Addresses unconstrained by training data leave unseen scores undetermined, with learned identifier embeddings and discrete tags realizing the worst case. Content-derived operators change scores continuously with content. A serialization change that shifts a RoPE offset moves a fixed query--key score by an explicit amount.

\noindent\textbf{LIS alternative.}
LIS replaces coordinates whose values grow with the data by structural coordinates whose comparison operators are computed by fixed rules and reused across instances.
A sentence, paragraph, document section, table row, KG fact, and repository entry can all be treated as LIS instances: their participants differ, but each participant has content, a role, and an instance identity.
The formal construction is developed next.


\section{Related Work}\label{sec:related}

Hierarchical text and code models separate local and higher-level coordinates.
HIBERT encodes words inside each sentence and then sentences inside a document~\citep{zhang2019hibert}. PermGen uses hierarchical positional embeddings with a sentence-level global position and a token-level local position~\citep{yu2021sentencepermuted}. Hierarchical Attention Transformers process segments with segment-wise and cross-segment encoders~\citep{chalkidis2022hat}.
These approaches are close in spirit to resetting local positions within sentences or paragraphs and then adding a higher-level coordinate outside the local token coordinate.
HiRoPE makes the corresponding rotary move for code by replacing one flat code position with hierarchical integer coordinates split across RoPE dimensions~\citep{zhang2024hirope}.
In LIS notation, HiRoPE is a fixed-coordinate rotary address with commuting hierarchy factors.
It still computes addresses from integer coordinates, so growth can produce unseen values or combinations.
LIS keeps that operator view but allows a higher-level coordinate of the index to identify an instance or role, or allows an address factor to be generated from content, rather than limiting the index to fixed integer hierarchy levels.
The main distinction is how the index is used: LIS maps structural coordinates to factors of the endpoint address $A(i)$, which determines the journey operator in the attention score, rather than placing them only in token embeddings or segment-level processing.
Appendix~\ref{app:additional-related-work} gives additional comparisons to CoPE, randomized positional encodings, and structured-data, graph, and geometric models.


\section{Labeled Incidence Structures and Journey Operators}\label{sec:framework}

\subsection{Labeled Incidence Structures}\label{ssec:lis}

In a sequence, a token's location is one integer.
In structured data, such as a KG triple, a hyperedge, or a tree node, it has three parts: the relation instance the token belongs to, the role it plays within that instance, and its position within that role.

\begin{definition}[Labeled Incidence Structure]\label{def:lis}
A \emph{labeled incidence structure} (LIS) is a tuple $\mathcal{H} = (C, V, E, S, \tau, \mathcal{I}, \sigma)$ where:
\begin{itemize}[nosep]
\item $C$ is a set of \emph{content identifiers},
\item $V$ is a finite set of \emph{token occurrences},
\item $E$ is a finite set of \emph{instances} (hyperedges),
\item $S$ is a finite set of \emph{role labels}. A label may combine a semantic role with a within-role position.
\item $\tau : V \to C$ maps each token occurrence to its content identifier,
\item $\mathcal{I} \subseteq V \times E$ is the incidence relation,
\item $\sigma : \mathcal{I} \to S$ assigns roles to incidences.
\end{itemize}
For each instance $e$, the map $\sigma(\cdot, e)$ is injective on $\{v : (v,e) \in \mathcal{I}\}$: distinct occurrences in the same instance play distinct roles.
A semantic role that spans multiple tokens is represented by role labels that include within-role position, e.g. $(s,p)$.
\end{definition}

We use \emph{role} and \emph{slot} interchangeably: a slot is the formal role label in $S$.
An \emph{endpoint} is an incidence $(v,e)\in\mathcal I$ together with its content and structural index.
In the informal notation used in the introduction, it is the pair
\[
(x,i)=(\tau(v),(\sigma(v,e),e)),
\]
or, when within-role position is separated, $(x,i)=(\tau(v),(p,s,e))$.
Here and throughout, the \emph{structural index} $i$ is the tuple of coordinate values or labels attached to an endpoint.
The \emph{structural address} $A(i)$ is the invertible linear operator generated from that index.
Thus $i$ is the input to the address map $A$, not the address itself.
The distinction between token occurrences $V$ and content identifiers $C$ is crucial: the same entity (content) can appear multiple times in a document or knowledge base.
For example, ``Paris'' may occur in several sentences and KG triples simultaneously. Each occurrence is a distinct $v \in V$ with $\tau(v) = \texttt{Paris} \in C$.
Occurrences sharing a content identifier receive the same entity embedding but distinct role--instance assignments.

\begin{example}[Standard examples]\label{prop:canonical}
The structured data types considered here fit the same LIS representation: token occurrences, content identifiers, incidence membership, and role labels give the endpoint representation directly.
\begin{enumerate}[nosep]
\item[(a)] \textbf{Sequences}: one instance $e_{\mathrm{seq}}$, roles $S = \{1, \ldots, T\}$, $\sigma(v_t, e_{\mathrm{seq}}) = t$, with $\tau(v_t)$ = word type at position $t$.
\item[(b)] \textbf{KG triples}: each $(h,r,t)$ is an instance $e_{hrt}$ with roles $\{\texttt{HEAD}_r, \texttt{TAIL}_r\}$. The map $\tau$ sends head/tail occurrences to their entity identifiers.
\item[(c)] \textbf{Rooted trees}: each parent-child edge is an instance with roles $\{\texttt{PARENT}, \texttt{CHILD}\}$, covering document, HTML, and paragraph--sentence hierarchies.
\item[(d)] \textbf{$N$-dimensional grids}: one instance, roles = grid coordinates $(i_1, \ldots, i_N)$.
\item[(e)] \textbf{$n$-ary hypergraphs}: each hyperedge with $n$ participants is an instance with $n$ distinct role labels.
For example, the drug--gene--disease interaction (Aspirin, COX-2, Inflammation) is one instance $e$ with roles $\{\textsc{drug},\textsc{target},\textsc{disease}\}$.
\end{enumerate}
In (a), repeated words (e.g., ``the'' at positions 3 and 7) are distinct token occurrences $v_3 \neq v_7$ with $\tau(v_3) = \tau(v_7) = \texttt{the}$.
At this stage, positions are just role labels. The next subsection assigns operators to these labels.
\end{example}

\begin{example}[Multi-structure document]\label{ex:multi}
A biomedical document can contain both
\[
\begin{aligned}
e_1 &: \text{``Aspirin inhibits COX-2 to treat inflammation''},\\
e_2 &: \{\textsc{drug}:\text{Aspirin},\textsc{target}:\text{COX-2},\textsc{disease}:\text{Inflammation}\}.
\end{aligned}
\]
In the sentence instance $e_1$, roles are text positions such as $\texttt{POS}_1$, $\texttt{POS}_3$, and $\texttt{POS}_6$.
In the hyperedge instance $e_2$, roles are semantic roles such as \textsc{drug}, \textsc{target}, and \textsc{disease}.
These are two LIS instances that share content identifiers but use different roles and instance identifiers.
\Cref{tab:lis-example} in Appendix~\ref{app:lis-example} shows the construction.
\end{example}

\subsection{Journey Operators and Coordinate Factors}\label{ssec:journey}

\noindent\textbf{Properties of the journey operator.}
Having defined the incidence structure, we now ask how that structure should enter attention.
We consider RoPE-style relative attention scores~\citep{su2021roformer} of the form
\[
\operatorname{score}(i,j)=q_i^\top P_{j\to i}k_j .
\]
We refer to $P_{j\to i}$ as the \emph{journey operator}: it is the operator applied to the key from endpoint $j$ when the attention score at endpoint $i$ is computed.
The name reflects the consistency law below: applying the comparison through an intermediate endpoint must give the same operator as applying it directly.
It should satisfy three basic properties.
\begin{itemize}
\item \textbf{Linear score operator.}
$P_{j\to i}$ is a linear map applied to the key vector before the dot product.
For the address construction below, these structural maps are taken to be invertible operators in a matrix group $G\leq\GL(d)$.

\item \textbf{Journey consistency.}
This is the standard consistency condition for relative measurements: comparing directly should agree with comparing through an intermediate endpoint.
Comparing an endpoint with itself should give the identity.
Because $P_{j\to i}$ acts by matrix multiplication, the key vector $k_j$ can be compared with the query at endpoint $i$ either directly or through an intermediate endpoint $h$:
\[
k_j \mapsto P_{j\to h}k_j \mapsto P_{h\to i}P_{j\to h}k_j.
\]
Therefore the consistency identities are
\[
P_{i\to i}=I,\qquad P_{h\to i}P_{j\to h}=P_{j\to i}.
\]
Choosing a reference endpoint $0$ and setting $A(i)=P_{i\to 0}$, these identities force
\[
P_{j\to i}=A(i)^{-1}A(j) .
\]
Thus the structural comparison is a matrix product of endpoint addresses.

\item \textbf{Coordinate independence.}
Journey consistency gives endpoint addresses $A(i)$.
An endpoint may have several structural coordinates $(i_0,\ldots,i_n)$, such as position, role, instance, paragraph, document, or source.
For any split of the structural index into two coordinate blocks $i=(\alpha,\beta)$, write $Q_{\alpha,\beta}=A(\alpha,\beta)$.
Changing one block should have the same address effect regardless of the other block.
For two coordinate-block domains $\mathcal X$ and $\mathcal Y$, a complete two-block grid means that $Q_{\alpha,\beta}$ is defined for every $(\alpha,\beta)\in\mathcal X\times\mathcal Y$.
On such a grid, it is enough to require either of the following equivalent forms:
\[
\begin{aligned}
Q_{\alpha',\beta}Q_{\alpha,\beta}^{-1}
&\ \text{depends only on }\alpha,\alpha'\text{, not on }\beta,
\end{aligned}
\]
or
\[
\begin{aligned}
Q_{\alpha,\beta}^{-1}Q_{\alpha,\beta'}
&\ \text{depends only on }\beta,\beta'\text{, not on }\alpha.
\end{aligned}
\]
\end{itemize}

\noindent\textbf{From coordinate changes to factorization.}
For any two-block split, coordinate independence is exactly what allows an arbitrary address assignment to factor into one factor for each block.
Applying the two-block result to successive splits, for example
\[
(i_0,\ldots,i_n)= (i_0,\ldots,i_{n-1}) \mid i_n,
\]
and then splitting the left block again, gives an ordered product factorization for multiple coordinates whenever the same independence property holds for the required splits.
When a factorization exists, we adopt the left-to-right convention
\[
Q_{i_0,\ldots,i_n}=R_{i_0}\cdots R_{i_n},
\]
with factors written in the chosen coordinate order.
Any fixed order works, as long as it is used consistently.
The following two-block result is the standard consistency result for relative measurements. Related versions appear in group synchronization~\citep{singer2011angular}.

\begin{theorem}[Coordinate independence is equivalent to address factorization]\label{thm:closed-loop-role-algebra}
Let $\mathcal X$ and $\mathcal Y$ be two coordinate-block domains, assume $Q_{\alpha,\beta}$ is defined for every $(\alpha,\beta)\in\mathcal X\times\mathcal Y$ (a complete two-block grid), and let $Q_{\alpha,\beta}$ take values in a group $G\leq \GL(d)$.
An address assignment $Q_{\alpha,\beta}$ admits a fixed-order factorization
\[
Q_{\alpha,\beta}=R_\alpha R_\beta
\]
if and only if either of the following identities holds for all coordinate-block values $\alpha,\alpha'$ and $\beta,\beta'$. Equivalently, both hold:
\[
Q_{\alpha',\beta}Q_{\alpha,\beta}^{-1}
=Q_{\alpha',\beta'}Q_{\alpha,\beta'}^{-1},
\qquad
Q_{\alpha,\beta}^{-1}Q_{\alpha,\beta'}
=Q_{\alpha',\beta}^{-1}Q_{\alpha',\beta'}.
\]
When the factorization exists, it is unique up to a single change of reference frame: $R_\alpha \mapsto R_\alpha U$, $R_\beta \mapsto U^{-1} R_\beta$ for a fixed operator $U$.
\end{theorem}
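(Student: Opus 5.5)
The plan is to fix base points $\alpha_0\in\mathcal X$ and $\beta_0\in\mathcal Y$ and build the factors directly from the two "axes" of the grid. First, the necessity direction is a direct computation. If $Q_{\alpha,\beta}=R_\alpha R_\beta$, then
\[
Q_{\alpha',\beta}Q_{\alpha,\beta}^{-1}=R_{\alpha'}R_\beta R_\beta^{-1}R_\alpha^{-1}=R_{\alpha'}R_\alpha^{-1},
\]
which is independent of $\beta$. Similarly $Q_{\alpha,\beta}^{-1}Q_{\alpha,\beta'}=R_\beta^{-1}R_{\beta'}$ is independent of $\alpha$. So both identities hold.

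Next I would show that the two identities are equivalent to each other, so that assuming either one suffices. Both are rearrangements of the single "plaquette" identity
\[
Q_{\alpha',\beta}\,Q_{\alpha,\beta}^{-1}\,Q_{\alpha,\beta'}\,Q_{\alpha',\beta'}^{-1}=I.
\]
Starting from the first identity, I right-multiply by $Q_{\alpha,\beta'}$ and left-multiply by $Q_{\alpha',\beta}^{-1}$. This gives $Q_{\alpha,\beta}^{-1}Q_{\alpha,\beta'}=Q_{\alpha',\beta}^{-1}Q_{\alpha',\beta'}$, which is the second identity. The reverse direction is symmetric. This is pure group algebra in $G$, and the only care needed is the order of the multiplications.

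For sufficiency I define
\[
R_\alpha:=Q_{\alpha,\beta_0},\qquad R_\beta:=Q_{\alpha_0,\beta_0}^{-1}Q_{\alpha_0,\beta}.
\]
Both lie in $G$, which uses the complete-grid assumption and closure of $G$. Then $R_\alpha R_\beta=Q_{\alpha,\beta_0}Q_{\alpha_0,\beta_0}^{-1}Q_{\alpha_0,\beta}$. Applying the second identity with $\alpha\to\alpha_0$, $\alpha'\to\alpha$, $\beta\to\beta_0$, $\beta'\to\beta$ gives $Q_{\alpha_0,\beta_0}^{-1}Q_{\alpha_0,\beta}=Q_{\alpha,\beta_0}^{-1}Q_{\alpha,\beta}$. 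Substituting, the product collapses to $Q_{\alpha,\beta}$. This step is where the grid hypothesis is essential: every cross term $Q_{\alpha,\beta_0}$ and $Q_{\alpha_0,\beta}$ must exist.

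Uniqueness is the step needing the most care in stating. Suppose $R_\alpha R_\beta=R'_\alpha R'_\beta$ for all $\alpha,\beta$. Then $R_\alpha^{-1}R'_\alpha=R_\beta R_\beta'^{-1}$ for every pair $(\alpha,\beta)$. The left side is independent of $\beta$ and the right side is independent of $\alpha$, so both equal a single constant $U\in G$. It follows that $R'_\alpha=R_\alpha U$ and $R'_\beta=U^{-1}R_\beta$. Conversely, any such $U$ preserves the products, so the factorization is unique up to exactly this change of reference frame. I expect no real obstacle here. The main pitfall is bookkeeping of multiplication order in the noncommutative group, which I would handle by writing every substitution explicitly.
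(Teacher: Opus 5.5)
Your proposal is correct and follows essentially the same route as the paper. The forward cancellation, the base-point construction $R_\alpha=Q_{\alpha,\beta_0}$, $R_\beta=Q_{\alpha_0,\beta_0}^{-1}Q_{\alpha_0,\beta}$ (exactly the paper's $D_\beta$ case), and the uniqueness argument (one side depends only on $\alpha$, the other only on $\beta$) all match the paper, and the only difference is organizational. You prove the two identities equivalent directly through the rectangle identity, whereas the paper gives a separate base-point construction for each hypothesis and recovers the other identity from the forward direction.
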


The proof is a cancellation argument after fixing a reference coordinate. Appendix~\ref{app:flat-connection} gives the details and uniqueness calculation.

\begin{definition}[Role and instance operator assignment]\label{def:role-alg}
A \emph{role--instance operator assignment} over $\mathcal{H}$ is $(G, \{R_s\}_{s \in S}, \{R_e\}_{e \in E})$ with $G \leq \GL(d)$, role operators $R_s \in G$, and instance operators $R_e \in G$.
\end{definition}

The assignment records two things: role operators $R_s$ record \emph{what role} a token plays (head, tail, position 3, etc.), while instance operators $R_e$ record \emph{which instance} it belongs to (a specific triple, a specific sentence).
Together, the composite $R_s R_e$ gives each token's full structural address.
When a separate local position coordinate is used, we extend the same assignment by a fixed position generator $R_{\mathrm{pos}}$. Equivalently, position can be folded into the role label as in \Cref{def:lis}.

\begin{definition}[LIS Journey Operator]\label{def:journey-op}
Given a role--instance operator assignment, the LIS journey operator between role--instance pairs $(s,e)$ (source) and $(s',e')$ (target) is:
\begin{equation}\label{eq:journey}
P_{(s,e) \to (s',e')} = (R_{s'} R_{e'})^{-1} (R_s R_e) = R_{e'}^{-1} R_{s'}^{-1} R_s R_e.
\end{equation}
\end{definition}
By \Cref{thm:closed-loop-role-algebra}, this is exactly the journey form forced by journey consistency plus coordinate independence for the role--instance split after choosing the convention $A(s,e)=R_sR_e$.
Conversely, every journey rule of this form satisfies identity, composition, and coordinate independence.
The formula is the relative operator between the two endpoints' addresses.
In attention, it is applied to the key from the source endpoint before computing the score at the target endpoint.

The next result packages the standard rotary special cases.
Separating local position from semantic role refines a role label into a semantic role $s$ and a local position $p$.

\begin{proposition}[LIS contains RoPE, RoPE-2D, and HiRoPE]\label{prop:lis-special-cases}
Assume coordinate independence is imposed for the split $(p,s)\mid e$ and then for $p\mid s$, and assume the local position factors are generated by a single step operator, $R_p=R_{\mathrm{pos}}^p$.
Then a token endpoint with within-role position $p$, role $s$, and instance $e$ can use the ordered address
\[
A(p,s,e)=R_{\mathrm{pos}}^pR_sR_e .
\]
Plain text is the special case $A(t)=R_{\mathrm{pos}}^t$, with journey $P_{j\to i}=R_{\mathrm{pos}}^{j-i}$, recovering standard RoPE~\citep{su2021roformer}.
A 2D grid with $A(i,j)=R_x^iR_y^j$ and commuting axis operators recovers RoPE-2D~\citep{heo2024ropevit}.
HiRoPE is the same construction with a fixed number $h$ of integer hierarchy coordinates, for example $A(m_1,\ldots,m_h)=R_1^{m_1}\cdots R_h^{m_h}$ on commuting RoPE subspaces~\citep{zhang2024hirope}.
For a binary KG relation $r$, using roles $\texttt{HEAD}_r$ and $\texttt{TAIL}_r$ with no separate instance factor, or in the commuting rotation setting, gives the head-to-tail comparison $R_{\texttt{TAIL}_r}^{-1}R_{\texttt{HEAD}_r}$.
\end{proposition}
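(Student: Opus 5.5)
The plan is to apply \Cref{thm:closed-loop-role-algebra} twice, following the successive splits named in the hypothesis, and then read off each special case by restricting the index set. First take the split $(p,s)\mid e$, with $\alpha=(p,s)$ and $\beta=e$, on the complete grid of role--position pairs times instances. Coordinate independence for this split is one of the two equivalent identities in the theorem. Hence $A(p,s,e)=R_{(p,s)}R_e$ for some $R_{(p,s)},R_e\in G$. Next apply the theorem to the left block alone, viewing $R_{(p,s)}$ as an address assignment on the grid of positions $p$ times roles $s$, with the split $p\mid s$. This gives $R_{(p,s)}=R_pR_s$.

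One point needs care here. The independence identity for $p\mid s$ has to hold for $R_{(p,s)}$, not only for $A$. I will take it as stated for $A$ with $e$ held fixed. Since $A(p,s,e)=R_{(p,s)}R_e$, the right factor $R_e$ cancels in $Q_{\alpha',\beta}Q_{\alpha,\beta}^{-1}$, so the identity transfers directly to $R_{(p,s)}$. Substituting the hypothesis $R_p=R_{\mathrm{pos}}^p$ then yields $A(p,s,e)=R_{\mathrm{pos}}^pR_sR_e$. The reference-frame freedom $U$ from the uniqueness clause can be absorbed into $R_s$ and $R_e$, so it does not disturb the $R_{\mathrm{pos}}^p$ form.

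The special cases then follow in turn.
\begin{enumerate}[nosep]
\item \textbf{Plain text.} Use a single role and a single instance, normalizing $R_s=R_e=I$ via the frame freedom. Then $A(t)=R_{\mathrm{pos}}^t$, and journey consistency gives $P_{j\to i}=A(i)^{-1}A(j)=R_{\mathrm{pos}}^{-i}R_{\mathrm{pos}}^{j}=R_{\mathrm{pos}}^{j-i}$, which is RoPE.
\item \textbf{RoPE-2D.} Apply the same two-step factorization to the coordinate split $i\mid j$, with each factor generated by one step operator, giving $A(i,j)=R_x^iR_y^j$. Commutation of $R_x$ and $R_y$ lets the journey $R_y^{-j}R_x^{-i}R_x^{i'}R_y^{j'}$ collapse to $R_x^{i'-i}R_y^{j'-j}$, the RoPE-2D form.
\item \textbf{HiRoPE.} Iterate the split $h-1$ times, as in the successive-split paragraph preceding \Cref{thm:closed-loop-role-algebra}. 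Commuting subspaces again make the journey a product of per-level relative powers.
\item \textbf{KG case.} Drop the position factor and either omit the instance factor or take all factors commuting. Then $P_{(\texttt{HEAD}_r,e)\to(\texttt{TAIL}_r,e)}=R_e^{-1}R_{\texttt{TAIL}_r}^{-1}R_{\texttt{HEAD}_r}R_e$ reduces to $R_{\texttt{TAIL}_r}^{-1}R_{\texttt{HEAD}_r}$.
\end{enumerate}

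I expect the main obstacle to be the grid hypothesis. \Cref{thm:closed-loop-role-algebra} requires a complete two-block grid, but actual LIS data need not realize every $(p,s,e)$ combination. The proposition only says the endpoint \emph{can use} this address, so I would handle this by defining the assignment on the full product domain and restricting it to the realized endpoints. The second, smaller point is checking that the frame normalizations $R_s=R_e=I$ and the reduction to integer powers are compatible with the uniqueness-up-to-$U$ statement. This is routine once $U$ is chosen so that it commutes with $R_{\mathrm{pos}}$ or is absorbed on the right.
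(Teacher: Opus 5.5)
Your proposal is correct and takes the same route the paper relies on. The paper gives no standalone proof, but the ``Multi-coordinate consequence'' paragraph of Appendix~\ref{app:flat-connection} is exactly your argument: apply \Cref{thm:closed-loop-role-algebra} to $(p,s)\mid e$, note that the intermediate assignment inherits the identity because the fixed right factor $R_e$ cancels, then split $p\mid s$; the special cases are then the direct journey calculations you write out.

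One small correction concerns plain text. The $U$-freedom leaves every total address unchanged, so it cannot give $R_s=R_e=I$ while keeping $R_p=R_{\mathrm{pos}}^p$. With one role and one instance you still have $A(t)=R_{\mathrm{pos}}^tW$, where $W=R_{s_0}R_{e_0}$, and the journey is $W^{-1}R_{\mathrm{pos}}^{j-i}W$. So simply choose $R_s=R_e=I$, which the statement allows since it only says the endpoint \emph{can use} this address. Alternatively, absorb the fixed conjugation by $W$ into the query/key projections.
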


Thus one structural-address form covers text tokens, KG endpoints, and hyperedge endpoints.
As with RoPE, these fixed-coordinate rotary cases inherit the length-extrapolation problem of Section~\ref{sec:scale}, generalized to unseen coordinate values and combinations.
For a KG relation, the displayed head-to-tail comparison is the role-to-role comparison used by the attention score.

\begin{remark}[RotatE as value transport]\label{rem:rotate-value-transport}
If the operator is also applied to the value vectors, setting $R_r:=R_{\texttt{TAIL}_r}^{-1}R_{\texttt{HEAD}_r}$ gives the RotatE relation rotation $v_{\texttt{TAIL}}\approx R_r v_{\texttt{HEAD}}$~\citep{sun2019rotate}.
That extension is beyond the scope of this paper. The results here use journey operators in the attention comparison $q^\top Pk$.
\end{remark}

\begin{proposition}[Instance-independent role comparisons require commutation]\label{thm:factored-comembership}\label{prop:coordinate-independence-commutation}
Let $A(s,e)=R_sR_e$.
Write $Q_{s,e}=A(s,e)$.
The same-instance comparison in the opposite order is
\[
Q_{t,e}^{-1}Q_{s,e}=R_e^{-1}(R_t^{-1}R_s)R_e
\]
and is independent of $e$ exactly when $R_t^{-1}R_s$ commutes with every instance difference $R_{e'}R_e^{-1}$.
If this reversed-order independence is required for every role pair, then after normalizing one reference role and one reference instance to the identity, every normalized role operator commutes with every normalized instance operator.
\end{proposition}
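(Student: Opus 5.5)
The proof is a direct computation followed by a normalization step. We treat the three claims in turn.

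\textbf{The displayed formula.} With $Q_{s,e}=R_sR_e$ we have $Q_{t,e}^{-1}=R_e^{-1}R_t^{-1}$, so
\[
Q_{t,e}^{-1}Q_{s,e}=R_e^{-1}(R_t^{-1}R_s)R_e .
\]

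\textbf{Independence of $e$.} Write $M=R_t^{-1}R_s$. Independence of $e$ means $R_e^{-1}MR_e=R_{e'}^{-1}MR_{e'}$ for all instances $e,e'$. Conjugate both sides by $R_{e'}$: multiply on the left by $R_{e'}$ and on the right by $R_{e'}^{-1}$. This gives
\[
(R_{e'}R_e^{-1})\,M\,(R_{e'}R_e^{-1})^{-1}=M ,
\]
which says exactly that $M$ commutes with $D=R_{e'}R_e^{-1}$. Every step is an invertible manipulation in $G$, so the argument runs in both directions and yields the stated equivalence.

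\textbf{Normalization.} Fix a reference role $s_0$ and a reference instance $e_0$. We use the frame freedom from \Cref{thm:closed-loop-role-algebra} to set $\tilde R_s=R_{s_0}^{-1}R_s$ and $\tilde R_e=R_eR_{e_0}^{-1}$. Then $\tilde R_s\tilde R_e=R_{s_0}^{-1}R_sR_eR_{e_0}^{-1}$.

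\emph{Compatibility with the hypothesis.} Both the condition and the comparisons transform compatibly under this change:
\[
\tilde Q_{t,e}^{-1}\tilde Q_{s,e}=R_{e_0}\,Q_{t,e}^{-1}Q_{s,e}\,R_{e_0}^{-1}.
\]
Conjugation by the fixed operator $R_{e_0}$ preserves independence of $e$. So the normalized assignment still satisfies reversed-order independence, now with $\tilde R_{s_0}=I$ and $\tilde R_{e_0}=I$.

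\emph{Applying the second step.} Take the role pair $(t,s)=(s_0,s)$, so $M=\tilde R_s$, and the instance pair $(e_0,e')$, so $D=\tilde R_{e'}\tilde R_{e_0}^{-1}=\tilde R_{e'}$. The second step then shows that $\tilde R_s$ commutes with $\tilde R_{e'}$ for every role $s$ and every instance $e'$.

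\textbf{Main obstacle.} The only delicate point is bookkeeping. The normalization must be a legitimate reference-frame change: right multiplication on the instance side and left multiplication on the role side, so the product $R_sR_e$ is altered only by fixed outer factors. We also have to check that it preserves the hypothesis, which is the conjugation identity above. We write that identity out explicitly rather than appeal loosely to the uniqueness clause of \Cref{thm:closed-loop-role-algebra}, because that clause concerns the inner $U$ freedom, not outer factors.
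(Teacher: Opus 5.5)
Your proof is correct and follows the paper's argument: it uses the same conjugation computation for the equivalence, and the same normalized factors $R_{s_0}^{-1}R_s$ and $R_eR_{e_0}^{-1}$. The one difference is that the paper applies the commutation criterion directly to the original factors, with role pair $(s,s_0)$ and instance pair $(e,e_0)$. Your detour through the renormalized assignment $\tilde Q$, including the check that conjugation by $R_{e_0}$ preserves the hypothesis, is valid but not needed, and that normalization is an outer change of $Q$ rather than the $U$-freedom of \Cref{thm:closed-loop-role-algebra}, as you note yourself.
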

The forward-order role change $Q_{t,e}Q_{s,e}^{-1}=R_tR_s^{-1}$ is instance-independent by cancellation.
The attention journey from \Cref{def:journey-op} is the reversed-order product. It may depend on the instance, and that dependence is how instance information can enter the score.
The proposition shows that asking this reversed product order to ignore instance as well imposes a commutation condition. Appendix~\ref{app:flat-connection} gives the proof.
Requiring both orders to be instance-independent forces commutation. If the order of role and instance factors is meant to carry information, the role and instance operators should not commute.


\section{Diagnostics with Shallow Transformers}\label{sec:diagnostics}

The diagnostics demonstrate three implementation claims: role/slot addresses can replace role-marker tokens in the attention score; text units, KG endpoints, and $n$-ary facts can be jointly trained as instances of the same LIS representation; and content-derived addresses can be computed and used inside ordinary Transformer attention.

We use diagnostics deliberately: these are small controlled tests of whether the address mechanism can be implemented and used inside ordinary attention.
They are not scale benchmarks. Larger experiments with multiple seeds, greater depth, and matched parameter counts are future work.
The plain-text diagnostic asks whether LIS is feasible for ordinary text, not only for KGs or structured records.
The LIS-based attention variant treats each newline-delimited line as a LIS instance.
A character endpoint has content, a local within-line position, and a line identity.
The model processes one character sequence causally, with RoPE-style query/key rotations generated from the line's local position and identity rather than only from global position.
Training loss and validation perplexity are computed over all tokens.

The KG/text diagnostics test the same principle for roles and fact identity.
The serialized baseline uses role-marker tokens and standard RoPE, while the LIS variants use relation-role indices to generate role-address factors and, in one diagnostic, add a content-derived fact-address factor.
These tests predict entities in a specified direction rather than copy strings.
Each example is still a single token sequence, but the LIS variants generate the address $A(i)$ from the endpoint's role and instance rather than from the serialized position alone.
Because KG facts and text templates name roles differently, the model must learn the cross-format role mapping before generating the target entity character by character.
Full setup, hit@1 scores where reported, parameter counts, fact-type tables, and metric definitions are in Appendix~\ref{app:diagnostic-details}.

\begin{table}[ht]
\centering
\caption{Representative diagnostics. The plain-text row reports all-token validation PPL from train context 256 to test context 4096. KG/text rows report entity-level h@5 (hit@5) over target names. Full ablations, h@1 scores where reported, and PPL results are in Appendix~\ref{app:diagnostic-details}. The main tiers are memorization, transfer, and generalization.}
\label{tab:main-strong-diagnostics}
\footnotesize
\setlength{\tabcolsep}{3pt}
\resizebox{\columnwidth}{!}{%
\begin{tabular}{@{}llcc l@{}}
\toprule
\textbf{Diagnostic} & \textbf{Condition} & \textbf{Flat/Serialized} & \textbf{LIS} & \textbf{What is tested} \\
\midrule
Tiny Shakespeare & PPL, 256$\to$4096 chars & 4.70$\to$13.36 & 4.81$\to$4.86 & text units as LIS instances \\
Independent KG/text & text eval, main tiers & .248--.269 & .401--.445 & KG-only facts to text queries \\
Independent KG/text & KG eval, main tiers & .280--.296 & .475--.510 & text-only facts to KG queries \\
Template-diverse KG$\to$text & single-fact test, chain2/3 & .126--.359 & .923--.972 & unseen names with one KG fact \\
Template-diverse KG$\to$text & same-entities test, chain2/3 & .030--.411 & .835--.952 & relation choice among distractor facts \\
Template-diverse KG$\to$text & same-relation test, chain2/3 & .000--.109 & .038--.182 & hardest entity-binding stress test \\
\bottomrule
\end{tabular}
}
\end{table}

\Cref{tab:main-strong-diagnostics} first checks the plain-text implementation: LIS-based attention treats each line as an instance and computes its address from line content inside the attention operator.
The remaining rows summarize the KG/text diagnostics where the serialized baseline reaches nonzero accuracy. In the 1,000-name zero-shot settings of Appendix~\ref{app:diagnostic-details} the serialized baseline scores zero, so those settings test only whether the LIS mechanism works, not the size of the gap.


\section{Discussion and Conclusion}\label{sec:conclusion}

LIS keeps endpoint content and structural coordinates explicit, so adding text, repository entries, table rows, or KG facts adds endpoints addressed by the same learned rule, instead of producing larger offsets, unseen coordinate values, or unseen coordinate combinations.
The formal results derive the journey form, characterize coordinate factorization, and show how serialization ties role addresses to token length.

The diagnostics demonstrate that role/slot addresses, joint LIS training of text units and KG endpoints, and content-derived addresses can all be implemented inside ordinary Transformer attention.
These implementations require no extra heads, blocks, or attention pass.
The measured training overhead comes from computing content-derived addresses.
When structural addresses are supplied directly, this cost is absent.
In the plain-text diagnostic, the line address is computed once from input embeddings and reused in RoPE-style query/key rotations.
This adds $O(T)$ work, while full attention is $O(T^2)$.
Appendix~\ref{app:timing-benchmark} reports training-step timing measurements.
In those runs, relative overhead decreases with model size.
Larger experiments with multiple seeds, greater depth, and matched parameter counts are the next step.

\bibliographystyle{apalike}
\bibliography{references}

@article{su2021roformer,
  title={RoFormer: Enhanced Transformer with Rotary Position Embedding},
  author={Su, Jianlin and Lu, Yu and Pan, Shengfeng and Murtadha, Ahmed and Wen, Bo and Liu, Yunfeng},
  journal={arXiv preprint arXiv:2104.09864},
  year={2021}
}

@inproceedings{press2022alibi,
  title={Train Short, Test Long: Attention with Linear Biases Enables Input Length Extrapolation},
  author={Press, Ofir and Smith, Noah A. and Lewis, Mike},
  booktitle={International Conference on Learning Representations},
  year={2022}
}

@article{chen2023positioninterpolation,
  title={Extending Context Window of Large Language Models via Positional Interpolation},
  author={Chen, Shouyuan and Wong, Sherman and Chen, Liangjian and Tian, Yuandong},
  journal={arXiv preprint arXiv:2306.15595},
  year={2023}
}

@article{peng2023yarn,
  title={{YaRN}: Efficient Context Window Extension of Large Language Models},
  author={Peng, Bowen and Quesnelle, Jeffrey and Fan, Honglu and Shippole, Enrico},
  journal={arXiv preprint arXiv:2309.00071},
  year={2023}
}

@inproceedings{ruoss2023randomized,
  title={Randomized Positional Encodings Boost Length Generalization of Transformers},
  author={Ruoss, Anian and Del{\'e}tang, Gr{\'e}goire and Genewein, Tim and Grau-Moya, Jordi and Csord{\'a}s, R{\'o}bert and Bennani, Mehdi and Legg, Shane and Veness, Joel},
  booktitle={Proceedings of the 61st Annual Meeting of the Association for Computational Linguistics (Volume 2: Short Papers)},
  pages={1889--1903},
  year={2023},
  address={Toronto, Canada},
  publisher={Association for Computational Linguistics},
  doi={10.18653/v1/2023.acl-short.161},
  url={https://aclanthology.org/2023.acl-short.161/}
}

@article{golovneva2024cope,
  title={Contextual Position Encoding: Learning to Count What's Important},
  author={Golovneva, Olga and Wang, Tianlu and Weston, Jason and Sukhbaatar, Sainbayar},
  journal={arXiv preprint arXiv:2405.18719},
  year={2024},
  url={https://arxiv.org/abs/2405.18719}
}

@inproceedings{heo2024ropevit,
  title={Rotary Position Embedding for Vision Transformer},
  author={Heo, Byeongho and Park, Song and Han, Dongyoon and Yun, Sangdoo},
  booktitle={European Conference on Computer Vision},
  year={2024},
  doi={10.48550/arXiv.2403.13298}
}

@inproceedings{zhang2024hirope,
  title={{HiRoPE}: Length Extrapolation for Code Models Using Hierarchical Position},
  author={Zhang, Kechi and Li, Ge and Zhang, Huangzhao and Jin, Zhi},
  booktitle={Proceedings of the 62nd Annual Meeting of the Association for Computational Linguistics (Volume 1: Long Papers)},
  pages={13615--13627},
  year={2024},
  address={Bangkok, Thailand},
  publisher={Association for Computational Linguistics},
  doi={10.18653/v1/2024.acl-long.735},
  url={https://aclanthology.org/2024.acl-long.735/}
}

@inproceedings{zhang2019hibert,
  title={{HIBERT}: Document Level Pre-training of Hierarchical Bidirectional Transformers for Document Summarization},
  author={Zhang, Xingxing and Wei, Furu and Zhou, Ming},
  booktitle={Proceedings of the 57th Annual Meeting of the Association for Computational Linguistics},
  pages={5059--5069},
  year={2019},
  doi={10.18653/v1/P19-1499}
}

@inproceedings{yu2021sentencepermuted,
  title={Sentence-Permuted Paragraph Generation},
  author={Yu, Wenhao and Zhu, Chenguang and Zhao, Tong and Guo, Zhichun and Jiang, Meng},
  booktitle={Proceedings of the 2021 Conference on Empirical Methods in Natural Language Processing},
  pages={5051--5062},
  year={2021},
  doi={10.18653/v1/2021.emnlp-main.412}
}

@article{chalkidis2022hat,
  title={An Exploration of Hierarchical Attention Transformers for Efficient Long Document Classification},
  author={Chalkidis, Ilias and Dai, Xiang and Fergadiotis, Manos and Malakasiotis, Prodromos and Elliott, Desmond},
  journal={arXiv preprint arXiv:2210.05529},
  year={2022}
}

@inproceedings{sun2019rotate,
  title={{RotatE}: Knowledge Graph Embedding by Relational Rotation in Complex Space},
  author={Sun, Zhiqing and Deng, Zhi-Hong and Nie, Jian-Yun and Tang, Jian},
  booktitle={International Conference on Learning Representations},
  year={2019}
}

@article{yao2019kgbert,
  title={{KG-BERT}: {BERT} for Knowledge Graph Completion},
  author={Yao, Liang and Mao, Chengsheng and Luo, Yuan},
  journal={arXiv preprint arXiv:1909.03193},
  year={2019}
}

@inproceedings{sun2020colake,
  title={{CoLAKE}: Contextualized Language and Knowledge Embedding},
  author={Sun, Tianxiang and Shao, Yunfan and Qiu, Xipeng and Guo, Qipeng and Hu, Yaru and Huang, Xuanjing and Zhang, Zheng},
  booktitle={Proceedings of the 28th International Conference on Computational Linguistics},
  pages={3660--3670},
  year={2020}
}

@inproceedings{feng2019hypergraph,
  title={Hypergraph Neural Networks},
  author={Feng, Yifan and You, Haoxuan and Zhang, Zizhao and Ji, Rongrong and Gao, Yue},
  booktitle={AAAI Conference on Artificial Intelligence},
  volume={33},
  year={2019}
}

@inproceedings{Liu2024HyperGT,
  title={Hypergraph Transformer for Semi-Supervised Classification},
  author={Liu, Zexi and Tang, Bohan and Ye, Ziyuan and Dong, Xiaowen and Chen, Siheng and Wang, Yanfeng},
  booktitle={ICASSP 2024},
  pages={7515--7519},
  year={2024}
}

@inproceedings{Hu2020HGT,
  title={Heterogeneous Graph Transformer},
  author={Hu, Ziniu and Dong, Yuxiao and Wang, Kuansan and Sun, Yizhou},
  booktitle={Proceedings of The Web Conference 2020},
  pages={2704--2710},
  publisher={ACM},
  year={2020},
  doi={10.1145/3366423.3380027}
}

@article{singer2011angular,
  title={Angular Synchronization by Eigenvectors and Semidefinite Programming},
  author={Singer, Amit},
  journal={Applied and Computational Harmonic Analysis},
  volume={30},
  number={1},
  pages={20--36},
  year={2011}
}

@inproceedings{cohen2019gauge,
  title={Gauge Equivariant Convolutional Networks and the Icosahedral {CNN}},
  author={Cohen, Taco and Weiler, Maurice and Kicanaoglu, Berkay and Welling, Max},
  booktitle={Proceedings of the 36th International Conference on Machine Learning},
  series={Proceedings of Machine Learning Research},
  volume={97},
  pages={1321--1330},
  year={2019}
}

@inproceedings{dehaan2021gauge,
  title={Gauge Equivariant Mesh {CNN}s: Anisotropic Convolutions on Geometric Graphs},
  author={de Haan, Pim and Weiler, Maurice and Cohen, Taco and Welling, Max},
  booktitle={International Conference on Learning Representations},
  year={2021}
}

@inproceedings{he2021gauge,
  title={Gauge Equivariant Transformer},
  author={He, Lingshen and Dong, Yiming and Wang, Yisen and Tao, Dacheng and Lin, Zhouchen},
  booktitle={Advances in Neural Information Processing Systems},
  volume={34},
  year={2021}
}

@inproceedings{schlichtkrull2018rgcn,
  title={Modeling Relational Data with Graph Convolutional Networks},
  author={Schlichtkrull, Michael and Kipf, Thomas N. and Bloem, Peter and van den Berg, Rianne and Titov, Ivan and Welling, Max},
  booktitle={European Semantic Web Conference (ESWC)},
  year={2018}
}

@article{wen2016beyondbinary,
  title={On the Representation and Embedding of Knowledge Bases Beyond Binary Relations},
  author={Wen, Jianfeng and Li, Jianxin and Mao, Yongyi and Chen, Shini and Zhang, Richong},
  journal={arXiv preprint arXiv:1604.08642},
  year={2016}
}

@inproceedings{guan2019nalp,
  title={Link Prediction on N-ary Relational Data},
  author={Guan, Saiping and Jin, Xiaolong and Wang, Yuanzhuo and Cheng, Xueqi},
  booktitle={The World Wide Web Conference},
  pages={583--593},
  publisher={ACM},
  year={2019},
  doi={10.1145/3308558.3313414}
}

@inproceedings{rosso2020hinge,
  title={Beyond Triplets: Hyper-Relational Knowledge Graph Embedding for Link Prediction},
  author={Rosso, Paolo and Yang, Dingqi and Cudr{\'e}-Mauroux, Philippe},
  booktitle={Proceedings of The Web Conference 2020},
  pages={1885--1896},
  publisher={ACM},
  year={2020},
  doi={10.1145/3366423.3380257}
}

@inproceedings{fatemi2020hype,
  title={Knowledge Hypergraphs: Prediction Beyond Binary Relations},
  author={Fatemi, Bahare and Taslakian, Perouz and Vazquez, David and Poole, David},
  booktitle={Proceedings of the Twenty-Ninth International Joint Conference on Artificial Intelligence},
  pages={2191--2197},
  year={2020},
  doi={10.24963/ijcai.2020/303}
}

@article{fatemi2023real,
  title={Knowledge Hypergraph Embedding Meets Relational Algebra},
  author={Fatemi, Bahare and Taslakian, Perouz and Vazquez, David and Poole, David},
  journal={Journal of Machine Learning Research},
  volume={24},
  number={105},
  pages={1--34},
  year={2023}
}

@inproceedings{ying2021graphormer,
  title={Do Transformers Really Perform Bad for Graph Representation?},
  author={Ying, Chengxuan and Cai, Tianle and Luo, Shengjie and Zheng, Shuxin and Ke, Guolin and He, Di and Shen, Yanming and Liu, Tie-Yan},
  booktitle={Advances in Neural Information Processing Systems},
  volume={34},
  year={2021}
}

@inproceedings{kim2022tokengt,
  title={Pure Transformers are Powerful Graph Learners},
  author={Kim, Jinwoo and Nguyen, Tien Dat and Min, Seonwoo and Cho, Sungjun and Lee, Moontae and Lee, Honglak and Hong, Seunghoon},
  booktitle={Advances in Neural Information Processing Systems},
  volume={35},
  year={2022},
  doi={10.52202/068431-1060}
}

@article{dwivedi2023benchmarking,
  title={Benchmarking Graph Neural Networks},
  author={Dwivedi, Vijay Prakash and Joshi, Chaitanya K. and Luu, Anh Tuan and Laurent, Thomas and Bengio, Yoshua and Bresson, Xavier},
  journal={Journal of Machine Learning Research},
  volume={24},
  number={43},
  pages={1--48},
  year={2023}
}

@article{fagin1977multivalued,
  title={Multivalued Dependencies and a New Normal Form for Relational Databases},
  author={Fagin, Ronald},
  journal={ACM Transactions on Database Systems},
  volume={2},
  number={3},
  pages={262--278},
  year={1977},
  doi={10.1145/320557.320571}
}

@book{abiteboul1995foundations,
  title={Foundations of Databases},
  author={Abiteboul, Serge and Hull, Richard and Vianu, Victor},
  publisher={Addison-Wesley},
  year={1995}
}

@misc{noy2006defining,
  title={Defining {N}-ary Relations on the Semantic Web},
  author={Noy, Natasha and Rector, Alan},
  howpublished={{W3C} Working Group Note},
  year={2006},
  url={https://www.w3.org/TR/swbp-n-aryRelations/}
}

\newpage
\appendix

The appendix follows the main paper's order: Appendix~\ref{app:serialization-fragmentation} gives the serialization result used in Section~\ref{sec:scale}; Appendix~\ref{app:address-regime-score-facts} collects results stated in terms of the attention score, namely an indistinguishability bound for unconstrained addresses, with learned identifier embeddings and discrete tags as examples; a continuity modulus for content-derived addresses; and the explicit offset modulus for rotary schemes. Appendix~\ref{app:additive-embeddings} records the additive-embedding comparison; Appendices~\ref{app:lis-example}--\ref{app:diagnostic-details} give the LIS example, proofs, and diagnostics; Appendix~\ref{app:additional-related-work} collects additional related work.

\section{Serialization Ties Role Addresses to Token Length}\label{app:serialization-fragmentation}

\begin{theorem}[Serialized offsets for a role pair depend on entity lengths]\label{thm:serialization-fragmentation}
Consider a fixed-order lossless text serialization of $m$-ary instances:
\[
\langle r_1\rangle\,x_1(e)\,\langle r_2\rangle\,x_2(e)\cdots
\langle r_m\rangle\,x_m(e),
\]
where role markers have fixed token lengths and the token length $\ell_u(e)$ of each role value $x_u(e)$ may vary by instance.
Let $p_s(e)$ be the flattened sequence position of the first entity token in role $s$.
Assign the serialized endpoint the RoPE-style positional address $Q_{s,e}=R_{\mathrm{pos}}^{p_s(e)}$, where $R_{\mathrm{pos}}$ is the one-token RoPE step operator.
For any $s<t$,
\[
p_t(e)-p_s(e)=\kappa_{s,t}+\sum_{u=s}^{t-1}\ell_u(e),
\]
where $\kappa_{s,t}$ depends only on the serialization template.
Consequently, suppose two instances $e,e'$ differ in the intervening length sum
\[
\delta_{s,t}(e,e')
:=\sum_{u=s}^{t-1}\bigl(\ell_u(e')-\ell_u(e)\bigr).
\]
If $\delta_{s,t}(e,e')\neq0$ and $R_{\mathrm{pos}}^{\delta_{s,t}(e,e')}\neq I$ for this discrepancy, then the serialized positional addresses admit no factorization $Q_{s,e}=R_sR_e$ on the four endpoints $\{s,t\}\times\{e,e'\}$.
\end{theorem}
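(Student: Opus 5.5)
The plan has two steps: compute the offset formula directly from the template, then reduce the non-factorization claim to the two-block criterion of \Cref{thm:closed-loop-role-algebra} on the $2\times2$ grid $\{s,t\}\times\{e,e'\}$.

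\textbf{Step 1: the offset formula.} Let $m_u$ be the fixed token length of marker $\langle r_u\rangle$. Scanning the serialization left to right gives
\[
p_s(e)=p_0+\sum_{u\le s}m_u+\sum_{u<s}\ell_u(e),
\]
where $p_0$ is the start offset of the instance. Subtracting this for $s$ from the same expression for $t$, the start offset cancels. The marker terms give $\kappa_{s,t}=\sum_{u=s+1}^{t}m_u$, which depends only on the template. The value terms give $\sum_{u=s}^{t-1}\ell_u(e)$. If a single sequence contains several instances, their start offsets may differ, but they cancel anyway because only differences within one instance enter.

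\textbf{Step 2: reduction to the grid criterion.} Apply \Cref{thm:closed-loop-role-algebra} with $\mathcal X=\{s,t\}$ and $\mathcal Y=\{e,e'\}$. The grid is complete because every instance carries every role, and the addresses lie in a group $G\le\GL(d)$ since $R_{\mathrm{pos}}$ is invertible. The theorem is stated for arbitrary domains, so it applies to this four-point restriction. By the theorem, a factorization $Q_{\sigma,\epsilon}=R_\sigma R_\epsilon$ on these four endpoints exists only if
\[
Q_{t,e}Q_{s,e}^{-1}=Q_{t,e'}Q_{s,e'}^{-1}.
\]
All addresses are powers of the single operator $R_{\mathrm{pos}}$, so they commute, and
\[
Q_{t,e}Q_{s,e}^{-1}=R_{\mathrm{pos}}^{p_t(e)-p_s(e)}.
\]
The condition therefore reads $R_{\mathrm{pos}}^{(p_t(e')-p_s(e'))-(p_t(e)-p_s(e))}=I$. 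By Step 1 this exponent equals $\delta_{s,t}(e,e')$, since the $\kappa_{s,t}$ terms cancel. The hypothesis $R_{\mathrm{pos}}^{\delta_{s,t}(e,e')}\ne I$ contradicts this, so no factorization exists.

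For completeness I would also give the necessity direction directly, so the argument does not lean on the full equivalence in the theorem. If $Q=R_\sigma R_\epsilon$, then $Q_{t,\epsilon}Q_{s,\epsilon}^{-1}=R_tR_\epsilon R_\epsilon^{-1}R_s^{-1}=R_tR_s^{-1}$, which does not depend on $\epsilon$.

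\textbf{Main obstacle.} The only real care is bookkeeping. Step 1 must correctly attribute the marker lengths between $s$ and $t$ to $\kappa_{s,t}$, and the sums must be indexed so that the value lengths counted are exactly $\ell_s,\dots,\ell_{t-1}$, matching the stated formula. The remark that $\delta_{s,t}\ne0$ alone is not enough (for example if $R_{\mathrm{pos}}$ has finite order) is already handled by the extra hypothesis $R_{\mathrm{pos}}^{\delta_{s,t}(e,e')}\ne I$.
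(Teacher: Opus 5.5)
Your proposal is correct and follows essentially the same route as the paper. You use the same template bookkeeping giving $\kappa_{s,t}=\sum_{u=s+1}^{t}m_u$, then the necessity direction of the coordinate-independence criterion on the $2\times 2$ rectangle, which reduces to $R_{\mathrm{pos}}^{\delta_{s,t}(e,e')}=I$; the paper phrases this check as triviality of the rectangle product $Q_{s,e'}Q_{t,e'}^{-1}Q_{t,e}Q_{s,e}^{-1}$, which is your identity $Q_{t,e}Q_{s,e}^{-1}=Q_{t,e'}Q_{s,e'}^{-1}$ rearranged.
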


\begin{remark}[Standard RoPE satisfies the nonaliasing hypothesis]\label{rem:standard-rope-no-aliasing}
For block-diagonal RoPE generators, the hypothesis means that no nonzero realized discrepancy aliases to the identity simultaneously in all frequency blocks.
For standard RoPE frequencies, exact aliasing would require all frequency blocks to return to the identity at the same integer offset.
The highest-frequency block rotates by one radian per token, so no nonzero integer offset can make that block exactly identity. Hence the full RoPE operator is nonidentity for every nonzero integer discrepancy.
\end{remark}

\begin{proof}[Proof of \Cref{thm:serialization-fragmentation}]
Write the fixed serialization template as alternating role markers and role values.
Let $\mu_r$ be the fixed token length of role marker $\langle r\rangle$.
For an instance $e$, the position of the first entity token in role $s$ is
\[
p_s(e)=b(e)+\sum_{u<s}\bigl(\mu_u+\ell_u(e)\bigr)+\mu_s,
\]
where $b(e)$ is the start position of the serialized instance.
Therefore, for $s<t$,
\[
p_t(e)-p_s(e)
=
\sum_{u=s}^{t-1}\ell_u(e)+\sum_{u=s+1}^{t}\mu_u
=
\kappa_{s,t}+\sum_{u=s}^{t-1}\ell_u(e),
\]
with $\kappa_{s,t}:=\sum_{u=s+1}^{t}\mu_u$ independent of $e$.
If the intervening entity-length sums differ for two instances, then the flattened offsets for the same role pair differ.
RoPE-style relative addressing over the flattened sequence uses the relative-position operator $R_{\mathrm{pos}}^{p_t(e)-p_s(e)}$. Hence the operator attached to role pair $(s,t)$ varies with the instance's tokenization pattern, up to any aliasing of the chosen positional generator.

Now assign each serialized endpoint the address $Q_{s,e}=R_{\mathrm{pos}}^{p_s(e)}$ and suppose a factorization $Q_{s,e}=R_sR_e$ existed on the four endpoints $\{s,t\}\times\{e,e'\}$.
Direct cancellation would give the rectangle identity
\[
Q_{s,e'}Q_{t,e'}^{-1}Q_{t,e}Q_{s,e}^{-1}=I
\]
from \Cref{thm:closed-loop-role-algebra}.
But substituting the serialized positional addresses gives
\[
\begin{aligned}
Q_{s,e'}Q_{t,e'}^{-1}Q_{t,e}Q_{s,e}^{-1}
&=
R_{\mathrm{pos}}^{p_s(e')}R_{\mathrm{pos}}^{-p_t(e')}R_{\mathrm{pos}}^{p_t(e)}R_{\mathrm{pos}}^{-p_s(e)} \\
&=
R_{\mathrm{pos}}^{p_s(e')-p_t(e')+p_t(e)-p_s(e)}.
\end{aligned}
\]
Using the offset identity above,
\begin{align*}
p_s(e')-p_t(e')+p_t(e)-p_s(e)
&=-\bigl(p_t(e')-p_s(e')\bigr)\\
&\quad+\bigl(p_t(e)-p_s(e)\bigr)\\
&=-\sum_{u=s}^{t-1}\bigl(\ell_u(e')-\ell_u(e)\bigr).
\end{align*}
Thus
\[
Q_{s,e'}Q_{t,e'}^{-1}Q_{t,e}Q_{s,e}^{-1}=R_{\mathrm{pos}}^{-\delta_{s,t}(e,e')}.
\]
If $R_{\mathrm{pos}}^{\delta_{s,t}(e,e')}\neq I$, this contradicts the rectangle identity, so the serialized positional address cannot be factored as $R_sR_e$ on that role--instance rectangle.
For commuting RoPE rotations, such a factorization would imply an additive role-plus-instance decomposition in every frequency block: $\omega_b p_s(e)\equiv a_{b,s}+b_{b,e}\pmod{2\pi}$ for all blocks $b$.
	The nonidentity rectangle product is exactly the obstruction to these per-block decompositions.
\end{proof}

\begin{corollary}[Fixed-width fields are the positional repair]\label{cor:fixed-width-fields}
Assume every instance contains roles $1,\ldots,m$.
Assume $R_{\mathrm{pos}}^\delta\neq I$ for every nonzero realized discrepancy.
Then the serialized positional address has coordinate independence on every two-role, two-instance rectangle if and only if each non-final role length $\ell_s(e)$, $s<m$, is instance-independent.
The final role length does not enter any within-instance role-pair offset in this fixed-order serialization.
Thus for data with genuinely variable lengths in roles that precede another role, removing this positional-address obstruction by fixed fields requires padding those roles to their maximum realized lengths. If the maximum is $L$, the serialized instance uses $\Theta(mL)$ tokens rather than $\sum_u\ell_u(e)$ actual content tokens.
\end{corollary}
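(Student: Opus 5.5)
The plan is to reduce the corollary to \Cref{thm:serialization-fragmentation} together with its offset formula. On a rectangle $\{s,t\}\times\{e,e'\}$ with $s<t$, the proof of that theorem gives
\[
Q_{s,e'}Q_{t,e'}^{-1}Q_{t,e}Q_{s,e}^{-1}=R_{\mathrm{pos}}^{-\delta_{s,t}(e,e')}.
\]
By \Cref{thm:closed-loop-role-algebra}, restricted to the complete two-by-two grid $\{s,t\}\times\{e,e'\}$, coordinate independence on that rectangle is equivalent to this rectangle product being $I$. This holds because the first identity of the theorem, $Q_{t,e}Q_{s,e}^{-1}=Q_{t,e'}Q_{s,e'}^{-1}$, rearranges exactly to the product above equalling the identity. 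Since all $Q$ are powers of $R_{\mathrm{pos}}$ and therefore commute, the ordering causes no trouble. So coordinate independence on every rectangle is equivalent to $R_{\mathrm{pos}}^{\delta_{s,t}(e,e')}=I$ for all $s<t$ and all $e,e'$. Under the nonaliasing hypothesis, this is in turn equivalent to $\delta_{s,t}(e,e')=0$ for all such choices.

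For the \emph{if} direction, suppose each $\ell_u$ with $u<m$ is instance-independent. Every sum $\sum_{u=s}^{t-1}\ell_u$ with $t\le m$ involves only indices $u\le m-1$, so each $\delta_{s,t}$ vanishes and the rectangle products are $I$.

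For the \emph{only if} direction, use adjacent role pairs $(s,s+1)$ with $s<m$, which exist because every instance contains all $m$ roles. Here $\delta_{s,s+1}(e,e')=\ell_s(e')-\ell_s(e)$. If this were nonzero, it would be a nonzero realized discrepancy, so by hypothesis $R_{\mathrm{pos}}^{\delta}\neq I$, contradicting independence. Hence each $\ell_s$ with $s<m$ is constant across instances. The remark that $\ell_m$ never enters follows immediately, since the summation range stops at $t-1\le m-1$.

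The main subtlety I expect is matching the stated hypothesis to what the argument actually needs. The hypothesis must cover the adjacent-pair discrepancies, which are realized discrepancies, so it applies. It should also be noted that the instance start offset $b(e)$ cancels, which was already handled in the theorem. For the padding statement, I would argue directly. Padding every non-final role to a common length makes each $\ell_s$ constant, and the common length must be at least the maximum realized length $L$, since content cannot be truncated under lossless serialization. The instance then uses at least $(m-1)L$ value tokens plus the marker tokens, which gives $\Theta(mL)$ when $L$ dominates the marker lengths (or is at least $1$, with $m$ treated as varying). This compares against the $\sum_u\ell_u(e)$ actual content tokens.
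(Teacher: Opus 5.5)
Your proposal is correct and follows essentially the same route as the paper. Coordinate independence on each rectangle forces $R_{\mathrm{pos}}^{\delta_{s,t}(e,e')}=I$, hence $\delta_{s,t}(e,e')=0$ by nonaliasing; adjacent pairs $t=s+1$ then make each non-final $\ell_s$ instance-independent; the converse is immediate from the definition of $\delta_{s,t}$; and fixed-width padding gives the last claim. You also make explicit a step the paper leaves implicit, namely that the first identity of \Cref{thm:closed-loop-role-algebra} on a $2\times 2$ grid is exactly the rectangle product being $I$. One small caveat: your $(m-1)L$ lower bound, like the paper's $\Theta(mL)$, tacitly assumes the non-final roles all reach the common maximum $L$, since per-role constancy alone only requires padding each role to its own maximum.
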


\begin{proof}
Coordinate independence forces $\delta_{s,t}(e,e')=0$ on every rectangle. Taking adjacent roles $t=s+1$ gives $\ell_s(e')=\ell_s(e)$ for each $s<m$.
The converse is immediate from the definition of $\delta_{s,t}$, and the padding claim follows by using fixed-width fields for the roles whose lengths enter later offsets.
\end{proof}

\begin{remark}[Address-level scope]\label{rem:serialization-mask-scope}
Serialization is lossless as storage but can make the positional address for a semantic role pair depend on length configurations.
In the commuting-rotation or no-instance-factor settings, the LIS role address keeps the role-pair journey shared across instances. In the general noncommuting setting, instance dependence is allowed and can carry structural information.
The argument concerns only the address assignment $Q_{s,e}=R_{\mathrm{pos}}^{p_s(e)}$, not the attention mask or prediction objective, so the same address-level obstruction applies to bidirectional MLM encoders and causal decoders over flattened $n$-ary text.
\end{remark}


\section{How Addressing Schemes Affect Attention Scores}\label{app:address-regime-score-facts}

This section states results directly about the attention score $q^\top Pk$.
The results make one point: addresses unconstrained by training data leave unseen scores undetermined, while content-derived addresses change scores continuously.

\begin{proposition}[Unconstrained addresses leave unseen scores undetermined]\label{prop:regime-necessity}
Let a score input be $z=(q,k,\iota)$, where $\iota$ is the input from which the addressing scheme produces the operator $P(\iota)$ used in the score $q^\top P(\iota)k$.
Let training data contain score inputs drawn from a set $\mathcal Z_{\mathrm{train}}$, and let $z_\star=(q_\star,k_\star,\iota_\star)$ be a test score input, possibly revealed to the learner.
Let $\mathcal F$ be a candidate class of score rules $f(z)$ for the direct attention score.
Suppose there exist $f_0,f_1\in\mathcal F$ such that
\[
f_0(z)=f_1(z)\quad\text{for every }z\in\mathcal Z_{\mathrm{train}},
\qquad
|f_0(z_\star)-f_1(z_\star)|=\Delta .
\]
Then every possibly randomized learner whose input is the training data and $z_\star$ has worst-case expected absolute error at least $\Delta/2$ over $\{f_0,f_1\}$.
For squared loss, the corresponding worst-case expected error is at least $\Delta^2/4$.
\end{proposition}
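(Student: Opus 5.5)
The argument is a standard two-point (Le Cam-style) indistinguishability bound. The key observation is that the learner's entire input, namely the training data together with $z_\star$, has the same distribution whether the true rule is $f_0$ or $f_1$. This holds because the training data consist of score inputs in $\mathcal Z_{\mathrm{train}}$ labeled by the true rule, and the two rules agree on all of them.

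I will formalize a learner as a randomized map producing a prediction $\hat y$ for $f(z_\star)$. Its output law is some fixed distribution $\mu$ on $\mathbb R$, and $\mu$ is the same under both hypotheses. This is the step that needs care: I will state explicitly that the data-generating process depends on $f$ only through its values on $\mathcal Z_{\mathrm{train}}$, possibly with noise whose law does not depend on $f$, so the data law is identical in the two cases.

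With $y_0=f_0(z_\star)$ and $y_1=f_1(z_\star)$, the triangle inequality gives, for every realization,
\[
|\hat y-y_0|+|\hat y-y_1|\ge |y_0-y_1|=\Delta .
\]
Taking expectations under the common law $\mu$ yields
\[
\mathbb E|\hat y-y_0|+\mathbb E|\hat y-y_1|\ge\Delta,
\]
so the larger of the two expected errors is at least $\Delta/2$. That is the worst-case bound over $\{f_0,f_1\}$.

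For squared loss I use $a^2+b^2\ge (a+b)^2/2$ pointwise:
\[
(\hat y-y_0)^2+(\hat y-y_1)^2\ge \tfrac12\bigl(|\hat y-y_0|+|\hat y-y_1|\bigr)^2\ge \Delta^2/2 .
\]
Averaging over $\mu$ shows that the maximum of the two expected squared errors is at least $\Delta^2/4$.

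There is no real computational obstacle. The only delicate point is the modeling one identified above: "the learner cannot distinguish $f_0$ from $f_1$" must mean equality of the full input distributions. Once that is stated, the rest is pointwise inequalities and averaging.
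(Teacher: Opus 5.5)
Your proposal is correct and follows the paper's proof essentially step for step: the learner's input has the same law under $f_0$ and $f_1$, the triangle inequality handles absolute loss, and a pointwise bound $(\hat y-y_0)^2+(\hat y-y_1)^2\ge\Delta^2/2$ handles squared loss. The only cosmetic difference is that the paper gets the squared-loss bound by completing the square around $(y_0+y_1)/2$ rather than via $a^2+b^2\ge(a+b)^2/2$; your remark that any label noise must have an $f$-independent law is a harmless sharpening of the paper's ``identical observations'' step.
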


\begin{proof}
The training observations and the revealed test input $z_\star$ are identical under $f_0$ and $f_1$, so any randomized learner induces the same prediction distribution for $\hat s$ in both cases.
Let $a=f_0(z_\star)$ and $b=f_1(z_\star)$.
For absolute loss,
\[
\mathbb E|\hat s-a|+\mathbb E|\hat s-b|
\geq |a-b|=\Delta
\]
by the triangle inequality, so one of the two expected absolute errors is at least $\Delta/2$.
For squared loss,
\[
(\hat s-a)^2+(\hat s-b)^2
=2\left(\hat s-\frac{a+b}{2}\right)^2+\frac{(a-b)^2}{2}
\geq \frac{\Delta^2}{2}
\]
pointwise.
Taking expectations, one of the two expected squared errors is at least $\Delta^2/4$.
\end{proof}

\begin{example}[Learned identifier embeddings and discrete tags realize the worst case]\label{ex:unseen-id-floor}
Let learned identifier embeddings assign an orthogonal operator $\Phi_j$ to each identifier $j$.
For an identifier $j^\star$ unseen in training, fix a unit vector $q$ and set $k=q$.
Take $\Phi^{(1)}_{j^\star}=I$ and $\Phi^{(2)}_{j^\star}=I-2qq^\top$, and let the two assignments agree on all trained identifiers and all contents.
The test scores are $+1$ and $-1$.
The same construction applies to discrete tag addresses.
With $B=R_t^{-1}R_s$, unit $q$, and $k=B^{-1}q$, assign arbitrarily close contents $z,u$ the tags $U_z=I$ and $U_u=B^{-1}(I-2qq^\top)B$.
Against a fixed target tag $U_w=I$, the two scores are $1$ and $-1$.
Content places no constraint here: the bound holds even if the learner observes the content of $j^\star$ or of the tagged items.
\end{example}

Coverage of the trained range, a shared addressing rule tying $\iota_\star$ to trained inputs, or a Lipschitz bound are three ways to exclude the indistinguishable pairs in \Cref{prop:regime-necessity}.
Content-derived addresses exclude such pairs locally: for a fixed query and key, nearby address inputs can only induce nearby scores.
For deterministic counters such as RoPE, the operator at an unseen offset is fixed by the rule.
The issue is not unseen-identifier ambiguity. The learned query/key projections were fit against the rotations seen in training, and \Cref{rem:rope-score-sensitivity} gives the exact modulus by which the direct score moves when the offset changes, including when that change carries the operator outside the positional range seen during training.

\noindent\textbf{Repository entries as content-addressed instances.}
For a repository entry with content descriptor $\mathrm{content}_e$, LIS treats the entry as an instance $e$ and computes its instance operator by a shared rule, e.g. $R_e=\varphi_\theta(\mathrm{content}_e)$.
An endpoint in that entry has address $A(s,e)=R_sR_e$, and the key used by attention can be pre-rotated and cached as $\hat k=A(s,e)k$ when the content and model parameters are fixed.
This is not a claim that every query must attend densely over every repository entry.
An implementation can still use a selection index, cache, or gating mechanism to choose a manageable set of addressed keys.
The difference is that the selected entry keeps its content-derived address. It is not re-serialized into a string whose relations are encoded as token offsets.

\noindent\textbf{Content-derived addresses.}
For repository entries or cached facts, the address can be computed from entry content, schema, or an enclosing hierarchy and stored with the key when those quantities are fixed.
For lines, sentences, or facts already present in a prefix, the same address form can be computed from the tokens in that unit.
Both cases generate the operator used in $q^\top Pk$.

\begin{proposition}[Content-derived instance operators change scores continuously]\label{thm:content-transfer}
Let $(\mathcal Z,\rho)$ be a metric space of instance contents, and let $\varphi:\mathcal Z\to \mathrm O(d)$ be $L$-Lipschitz in operator norm.
Fix orthogonal role operators $\{R_s\}_{s\in S}$ and assign an instance with content $z$ the address $A(s,z)=R_s\varphi(z)$.
For $\|q\|,\|k\|\leq1$, define
\[
F_{s,t}(z,z';q,k)=q^\top A(t,z')^{-1}A(s,z)k .
\]
Then
\[
\left|F_{s,t}(z,z';q,k)-F_{s,t}(u,u';q,k)\right|
\leq L\bigl(\rho(z,u)+\rho(z',u')\bigr).
\]
Consequently, if every test instance content is within $\eta$ of a training instance content, then the corresponding fixed-query/key structural scores are within $2L\eta$.
\end{proposition}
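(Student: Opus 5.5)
The plan is to write the journey operator explicitly and reduce the score difference to operator-norm differences of $\varphi$. Since all operators are orthogonal, inverses are transposes, so
\[
A(t,z')^{-1}A(s,z)=\varphi(z')^\top R_t^\top R_s\,\varphi(z).
\]
Set $M=R_t^\top R_s$, which is orthogonal and does not depend on the contents. Then
\[
F_{s,t}(z,z';q,k)=q^\top \varphi(z')^\top M\varphi(z)k .
\]

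Next I would telescope through the mixed term $\varphi(z')^\top M\varphi(u)$:
\[
\varphi(z')^\top M\varphi(z)-\varphi(u')^\top M\varphi(u)
=\varphi(z')^\top M\bigl(\varphi(z)-\varphi(u)\bigr)+\bigl(\varphi(z')-\varphi(u')\bigr)^\top M\varphi(u).
\]
Each summand is bounded in operator norm by submultiplicativity. Orthogonal factors have norm $1$, and transposition preserves operator norm. The first summand is therefore at most $\|\varphi(z)-\varphi(u)\|\le L\rho(z,u)$. The second is at most $\|\varphi(z')-\varphi(u')\|\le L\rho(z',u')$.

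Finally, $|q^\top X k|\le \|q\|\,\|X\|\,\|k\|\le \|X\|$ for $\|q\|,\|k\|\le 1$. Applied to the difference operator, this gives the claimed bound $L(\rho(z,u)+\rho(z',u'))$.

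For the consequence, given test contents $z,z'$, choose training contents $u,u'$ with $\rho(z,u)\le\eta$ and $\rho(z',u')\le\eta$. The bound then becomes $2L\eta$.

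There is no real obstacle here. The only point needing care is to put the Lipschitz increment on a single factor in each telescoped term, so that the remaining factors are exactly orthogonal with norm one. Note also that $(\varphi(z')-\varphi(u'))^\top$ has the same norm as $\varphi(z')-\varphi(u')$.
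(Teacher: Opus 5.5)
Your proposal is correct and follows essentially the same route as the paper's proof. Both write the journey operator as $\varphi(z')^{-1}B_{s,t}\varphi(z)$ with $B_{s,t}=R_t^{-1}R_s$, use orthogonality to bound the operator-norm difference by $L\bigl(\rho(z,u)+\rho(z',u')\bigr)$, and then pass to the score through unit $q,k$. The differences are cosmetic: you use transposes where the paper uses inverses, and you write out the telescoping step that the paper leaves implicit.
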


\begin{proof}
Write $B_{s,t}=R_t^{-1}R_s$ and
\[
A(t,z')^{-1}A(s,z)=\varphi(z')^{-1}B_{s,t}\varphi(z).
\]
For two content pairs $(z,z')$ and $(u,u')$,
\begin{align*}
&\left\|\varphi(z')^{-1}B_{s,t}\varphi(z)-\varphi(u')^{-1}B_{s,t}\varphi(u)\right\|_{\mathrm{op}}\\
&\qquad\leq
\|\varphi(z)-\varphi(u)\|_{\mathrm{op}}
+\|\varphi(z')-\varphi(u')\|_{\mathrm{op}}
\leq L\bigl(\rho(z,u)+\rho(z',u')\bigr),
\end{align*}
using orthogonality and
$\|\varphi(z')^{-1}-\varphi(u')^{-1}\|_{\mathrm{op}}
=\|\varphi(z')-\varphi(u')\|_{\mathrm{op}}$.
Multiplication by unit vectors $q,k$ cannot increase the operator-norm difference.
\end{proof}

\begin{remark}[Offset changes move scores by an explicit amount]\label{rem:rope-score-sensitivity}
For block-diagonal RoPE with $2\times2$ rotation blocks $R(\omega_b\delta)$, orthogonality gives
\[
\sup_{\|q\|=\|k\|=1}
\left|q^\top\bigl(R_{\mathrm{pos}}^\delta-R_{\mathrm{pos}}^{\delta'}\bigr)k\right|
=
\left\|R_{\mathrm{pos}}^{\Delta}-I\right\|_{\mathrm{op}}
=
\max_b 2\left|\sin\!\left(\frac{\omega_b\Delta}{2}\right)\right|.
\]
Here $\Delta=\delta-\delta'$.
For standard RoPE with a one-radian-per-token block, this is at least $2|\sin(\Delta/2)|$: a serialization change that shifts an offset by one token can move a fixed query--key score by order one.
The same blockwise calculation covers RoPE-2D and HiRoPE with the scalar offset replaced by the coordinate-vector phase change.
\end{remark}

\section{Additive Structural Embeddings and Score Biases}\label{app:additive-embeddings}

This section does not introduce a new architecture.
It compares additive structural embeddings and score biases with putting structure directly into the pairwise journey matrix.
Additive structural embeddings, endpoint tags, and score biases can add role-dependent affine terms around a shared content--content bilinear score.
A role-pair-dependent matrix between the two content vectors contains all these additive forms as special cases. LIS operators are one structured way to produce such matrices.

\begin{proposition}[Additive embeddings and biases are restricted score forms]\label{prop:additive-embeddings}
Fix content dimension $d\geq1$ and let $\Lambda$ contain at least two role-pair labels.
Let $\mathcal A$ be the class of additive one-layer structural score families
\[
\ell_{\mathcal A}(\lambda;c_1,c_2)
=c_1^\top M c_2+u_\lambda^\top c_1+v_\lambda^\top c_2+\beta_\lambda,
\]
for $\lambda\in\Lambda$, with one shared content matrix $M$ and role-pair-dependent linear and constant terms.
This class contains the direct score produced by additive structural embeddings $x_r=c+a_r$ with linear query/key projections, as well as additive role-pair score biases.
Let $\mathcal L$ be the class of score families
\[
\ell_{\mathcal L}(\lambda;c_1,c_2)=\tilde c_1^\top P_\lambda\tilde c_2,\qquad
\tilde c=(c,1),
\]
where the matrix between the two augmented content vectors may depend on the role pair $\lambda$.
Then $\mathcal A\subsetneq\mathcal L$.
The strict separating example can be chosen to be LIS-realizable: two role pairs whose content blocks are $I$ and $-I$ with zero affine blocks.
\end{proposition}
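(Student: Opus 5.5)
The plan has three parts: show $\mathcal A\subseteq\mathcal L$ by a block-matrix embedding, exhibit a separating LIS-realizable pair, and briefly justify the containment claims about additive embeddings and biases.

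\textbf{Inclusion.} Given $\ell_{\mathcal A}$ with data $(M,u_\lambda,v_\lambda,\beta_\lambda)$, define the $(d+1)\times(d+1)$ matrix
\[
P_\lambda=\begin{pmatrix} M & u_\lambda\\ v_\lambda^\top & \beta_\lambda\end{pmatrix}.
\]
Expanding with $\tilde c=(c,1)$ gives
\[
\tilde c_1^\top P_\lambda\tilde c_2=c_1^\top Mc_2+u_\lambda^\top c_1+v_\lambda^\top c_2+\beta_\lambda,
\]
so $\mathcal A\subseteq\mathcal L$.

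\textbf{Membership claims.} For additive embeddings $x_r=c+a_r$ with linear projections $q=W_Q x$ and $k=W_K x$, the score is
\[
(c_1+a_{r_1})^\top W_Q^\top W_K(c_2+a_{r_2}).
\]
Expanding, the content--content term has the shared matrix $M=W_Q^\top W_K$. The remaining terms are linear in $c_1$ or $c_2$, with coefficients depending only on the role pair $\lambda=(r_1,r_2)$, plus a constant. This is exactly the form of $\mathcal A$. An additive role-pair bias simply adds to $\beta_\lambda$. I would state this in one line.

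\textbf{Strictness.} Pick two labels $\lambda_1,\lambda_2$ and set
\[
P_{\lambda_1}=\mathrm{diag}(I,0),\qquad P_{\lambda_2}=\mathrm{diag}(-I,0),
\]
so the family is $\ell(\lambda_1)=c_1^\top c_2$ and $\ell(\lambda_2)=-c_1^\top c_2$. Suppose some $\ell_{\mathcal A}$ agreed with it. The purely bilinear part of a polynomial in $(c_1,c_2)$ is uniquely determined: take the mixed second derivative $\partial^2/\partial c_1\partial c_2$, or equivalently compare coefficients of $c_1^{(a)}c_2^{(b)}$. That part is $M$ for both labels, which would force $M=I$ and $M=-I$ at once. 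This is impossible since $d\ge1$. Any remaining labels in $\Lambda$ can be assigned arbitrarily, say with the same matrix as $\lambda_1$.

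\textbf{LIS realizability.} Work in the content block with orthogonal role operators. Take $R_s=R_t=I$ for the first pair, giving journey $I$. For the second pair take $R_t=I$ and $R_{s'}=-I$, which lies in $\mathrm O(d)$, giving journey $R_t^{-1}R_{s'}=-I$. In both cases the affine blocks are zero.

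The main obstacle is the uniqueness-of-coefficients step, and it is elementary. The only care needed is to phrase "identical as functions of $c_1,c_2$" so that the bilinear coefficient is forced; polarization handles this, for example by substituting $c_1=e_a$, $c_2=e_b$ and their scalings.
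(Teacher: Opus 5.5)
Your proposal is correct and follows essentially the paper's proof: the same block-matrix embedding of $(M,u_\lambda,v_\lambda,\beta_\lambda)$, the same expansion of $(c_1+a_{r_1})^\top W_Q^\top W_K(c_2+a_{r_2})$, and the same $\pm I$ separating pair realized by role operators $I$ and $-I$. The paper reads off $M$ by first setting $c_2=0$ and then $c_1=0$ to kill the affine terms, whereas you read off the bilinear coefficient directly; this difference is cosmetic, and the paper's extra invertible embedding $\widehat B_\lambda$ is not needed for the statement.
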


\begin{proof}
First, additive structural embeddings with linear projections have the displayed affine form.
Write $x_r=c+a_r$ and let $M=W_Q^\top W_K$.
For source role $r_1$ and target role $r_2$,
\[
(c_1+a_{r_1})^\top M(c_2+a_{r_2})
=c_1^\top M c_2
+(Ma_{r_2})^\top c_1
+(M^\top a_{r_1})^\top c_2
+a_{r_1}^\top M a_{r_2}.
\]
Thus additive structural embeddings contribute role-pair-dependent linear terms and constants around one shared content matrix.
An additive score bias simply changes the constant term $\beta_\lambda$.

For any score in $\mathcal A$, set $\tilde c=(c,1)$ and
\[
B_\lambda=
\begin{bmatrix}
M & u_\lambda\\
v_\lambda^\top & \beta_\lambda
\end{bmatrix}.
\]
Then
\[
\tilde c_1^\top B_\lambda\tilde c_2
=c_1^\top M c_2+u_\lambda^\top c_1+v_\lambda^\top c_2+\beta_\lambda
=\ell_{\mathcal A}(\lambda;c_1,c_2),
\]
so every additive embedding or bias score in this class belongs to $\mathcal L$.
This containment is for the general score form. An arbitrary family of matrices need not satisfy LIS journey consistency.
If one insists that this matrix come from an invertible journey operator, embed $B_\lambda$ as the visible block of the larger matrix
\[
\widehat B_\lambda=
\begin{bmatrix}
B_\lambda&I\\
I&0
\end{bmatrix},
\qquad
\widehat c=(\tilde c,0).
\]
The matrix $\widehat B_\lambda$ is invertible for every $B_\lambda$: if
$\widehat B_\lambda(x,y)=(B_\lambda x+y,x)=0$, then $x=0$ and hence $y=0$.
Moreover $\widehat c_1^\top\widehat B_\lambda\widehat c_2=\tilde c_1^\top B_\lambda\tilde c_2$.

The containment is strict.
Let $M_0\neq0$ be any $d\times d$ matrix and choose two role-pair labels $\lambda_+$ and $\lambda_-$.
The class $\mathcal L$ represents
\[
\ell_+(c_1,c_2)=c_1^\top M_0c_2,\qquad
\ell_-(c_1,c_2)=-c_1^\top M_0c_2
\]
by taking $P_{\lambda_+}=M_0$ and $P_{\lambda_-}=-M_0$ on the content block.
For the LIS-valid orthogonal example, take $M_0=I$.
Then $P_{\lambda_+}=I$ and $P_{\lambda_-}=-I$ are obtained from role operators $R_a=I$, $R_b=I$, and $R_c=-I$ via same-instance comparisons $R_a^{-1}R_b$ and $R_a^{-1}R_c$.

No additive interface can represent both $\ell_+$ and $\ell_-$ with one shared matrix $M$ between the entity contents.
Representing $\ell_+$ for all $c_1,c_2$ and setting $c_2=0$ forces $u_{\lambda_+}=0$ and $\beta_{\lambda_+}=0$. Setting $c_1=0$ forces $v_{\lambda_+}=0$. Hence $M=M_0$.
Representing $\ell_-$ by the same argument forces $M=-M_0$, a contradiction since $M_0\neq0$.
\end{proof}


\section{LIS Encoding Example}\label{app:lis-example}

Table~\ref{tab:lis-example} shows only the participating occurrences from the sentence and hyperedge. Omitted non-entity text tokens account for the gaps in the occurrence numbering.

\begin{table}[h]
\centering\small
\caption{LIS encoding of a text sentence (instance $e_1$) and a 3-ary drug--gene--disease hyperedge (instance $e_2$). Shared content identifiers link the two structures. Text positions and hyperedge roles are both role labels.}
\label{tab:lis-example}
\resizebox{\columnwidth}{!}{%
\begin{tabular}{@{}lllll@{}}
\toprule
\textbf{Occurrence $v$} & $\boldsymbol{\tau(v)}$ & \textbf{Instance $e$} & \textbf{Slot $s$} & \textbf{Address} \\
\midrule
$v_1$ (``Aspirin'', text) & Aspirin & $e_1$ (sentence) & $\texttt{POS}_1$ & $R_{\texttt{POS}_1} R_{e_1}$ \\
$v_3$ (``COX-2'', text) & COX-2 & $e_1$ (sentence) & $\texttt{POS}_3$ & $R_{\texttt{POS}_3} R_{e_1}$ \\
$v_6$ (``inflammation'', text) & Inflammation & $e_1$ (sentence) & $\texttt{POS}_6$ & $R_{\texttt{POS}_6} R_{e_1}$ \\
\midrule
$v_7$ (``Aspirin'', drug) & Aspirin & $e_2$ (hyperedge) & \textsc{drug} & $R_{\textsc{drug}} R_{e_2}$ \\
$v_8$ (``COX-2'', target) & COX-2 & $e_2$ (hyperedge) & \textsc{target} & $R_{\textsc{target}} R_{e_2}$ \\
$v_9$ (``Inflammation'', disease) & Inflammation & $e_2$ (hyperedge) & \textsc{disease} & $R_{\textsc{disease}} R_{e_2}$ \\
\bottomrule
\end{tabular}
}
\end{table}

\section{Coordinate Independence and Address Factorization}\label{app:flat-connection}

\begin{proof}[Proof of \Cref{thm:closed-loop-role-algebra}]
First suppose $Q_{\alpha,\beta}=R_\alpha R_\beta$.
For a change in the first block while the second block is fixed at $\beta$,
\[
Q_{\alpha',\beta}Q_{\alpha,\beta}^{-1}
=
R_{\alpha'}R_\beta(R_\alpha R_\beta)^{-1}
=
R_{\alpha'}R_\alpha^{-1},
\]
which is independent of $\beta$.
For a change in the second block while the first block is fixed at $\alpha$,
\[
Q_{\alpha,\beta}^{-1}Q_{\alpha,\beta'}
=
(R_\alpha R_\beta)^{-1}R_\alpha R_{\beta'}
=
R_\beta^{-1}R_{\beta'},
\]
which is independent of $\alpha$.
Thus every factorized address assignment has coordinate independence.

Conversely, assume the first coordinate-independence identity and fix a reference first-block value $\alpha_0$.
By the first coordinate identity, the left first-block ratio
\[
C_\alpha:=Q_{\alpha,\beta}Q_{\alpha_0,\beta}^{-1}
\]
is independent of the second block $\beta$.
Therefore
\[
Q_{\alpha,\beta}
=
C_\alpha Q_{\alpha_0,\beta}.
\]
Set $R_\alpha:=C_\alpha$ and $R_\beta:=Q_{\alpha_0,\beta}$.
Then $Q_{\alpha,\beta}=R_\alpha R_\beta$, so the address factors as claimed.
The forward direction already showed that any such factorization implies both coordinate-independence identities, so the second displayed identity follows automatically.

If instead only the second coordinate-independence identity is assumed, fix a reference second-block value $\beta_0$ and define
\[
D_\beta:=Q_{\alpha,\beta_0}^{-1}Q_{\alpha,\beta}.
\]
The second coordinate identity makes $D_\beta$ independent of $\alpha$, and therefore
\[
Q_{\alpha,\beta}=Q_{\alpha,\beta_0}D_\beta.
\]
Setting $R_\alpha:=Q_{\alpha,\beta_0}$ and $R_\beta:=D_\beta$ gives the same fixed-order factorization.

It remains to prove uniqueness. Suppose $Q_{\alpha,\beta}=R_\alpha R_\beta=\widetilde R_\alpha\widetilde R_\beta$ are two factorizations.
For every first-block and second-block value, $R_\alpha^{-1}\widetilde R_\alpha = R_\beta\widetilde R_\beta^{-1}$.
The left side depends only on $\alpha$ and the right side only on $\beta$. Since the
identity holds for all pairs, this common value is the same for all first-block
and second-block values. Call it $U$.
Then $\widetilde R_\alpha=R_\alpha U$ and $\widetilde R_\beta=U^{-1}R_\beta$.
\end{proof}

\noindent\textbf{Rectangle form.}
For later use, the first identity in \Cref{thm:closed-loop-role-algebra} is equivalently the triviality of the four-step product
\[
L_Q(\alpha,\alpha';\beta,\beta')
:=
Q_{\alpha,\beta'}Q_{\alpha',\beta'}^{-1}Q_{\alpha',\beta}Q_{\alpha,\beta}^{-1}
=I .
\]
This is the rectangle product used in Appendix~\ref{app:serialization-fragmentation}.

\noindent\textbf{Multi-coordinate consequence.}
Let $i=(i_0,\ldots,i_n)$ range over a complete product of coordinate values, and assume coordinate independence for every split of these coordinates into two blocks.
Apply \Cref{thm:closed-loop-role-algebra} first to the split
\[
(i_0,\ldots,i_{n-1}) \mid i_n .
\]
This gives $A(i_0,\ldots,i_n)=B(i_0,\ldots,i_{n-1})R_{i_n}$.
Fix a reference value of $i_n$.
The intermediate address assignment $B$ differs from the restricted addresses $A(i_0,\ldots,i_{n-1},i_n^0)$ only by a fixed right factor, so it inherits the first coordinate-independence identity for every split of $(i_0,\ldots,i_{n-1})$.
On a complete grid, \Cref{thm:closed-loop-role-algebra} then gives the equivalent second identity as well.
Repeating the argument gives
\[
A(i_0,\ldots,i_n)=R_{i_0}R_{i_1}\cdots R_{i_n}
\]
in the chosen coordinate order.

\noindent\textbf{Order convention and commutation.}
The ordered product above uses a fixed coordinate order and does not require the factors to commute.
If the same endpoint must have the same address under arbitrary permutations of the coordinate order, then adjacent swaps force the corresponding coordinate-factor families to commute.
Thus commutation is an additional order-invariance requirement, not part of ordered address factorization itself.

\begin{proof}[Proof of \Cref{prop:coordinate-independence-commutation}]
Write $Q_{s,e}=A(s,e)$ and $B_{s,t}=R_t^{-1}R_s$.
For a fixed instance $e$,
\[
Q_{t,e}^{-1}Q_{s,e}
=(R_tR_e)^{-1}(R_sR_e)
=R_e^{-1}B_{s,t}R_e.
\]
This reversed product is independent of $e$ exactly when, for every pair $e,e'$,
\[
R_e^{-1}B_{s,t}R_e=R_{e'}^{-1}B_{s,t}R_{e'}.
\]
Multiplying on the left by $R_{e'}$ and on the right by $R_e^{-1}$ gives
\[
(R_{e'}R_e^{-1})B_{s,t}=B_{s,t}(R_{e'}R_e^{-1}),
\]
which is precisely commutation with $R_{e'}R_e^{-1}$.
The converse follows by reversing the same algebra.

Now fix a reference role $s_0$ and reference instance $e_0$.
Normalize the factors so that $R_{s_0}=I$ and $R_{e_0}=I$.
Taking the role pair $(s,s_0)$ gives the normalized role factor $\bar R_s=R_{s_0}^{-1}R_s$, and taking the instance pair $(e,e_0)$ gives the normalized instance factor $\bar R_e=R_eR_{e_0}^{-1}$.
The relative commutation condition therefore becomes ordinary commutation:
\[
\bar R_s\bar R_e=\bar R_e\bar R_s
\qquad\text{for all roles }s\text{ and instances }e.
\]
\end{proof}

\noindent\textbf{LIS journey formula.}
Substituting the factorized address $A(s,e)=R_sR_e$ from \Cref{thm:closed-loop-role-algebra} into the consistent journey form $P_{a\to b}=A(b)^{-1}A(a)$ gives \Cref{eq:journey}. The converse is the direct cancellation calculation already used above.

\section{Diagnostic Details}\label{app:diagnostic-details}

\subsection{Plain-Text Context Growth with Content-Derived Line Addresses}\label{app:tiny-shakespeare-lisformer}

This diagnostic tests whether LIS can be implemented for ordinary text.
It treats each completed line as an instance and asks whether a longer context can be handled as more instances rather than as larger values in one positional coordinate system.
All four models are character-level causal Transformers trained on Tiny Shakespeare with a 65-token vocabulary, four layers, four attention heads, embedding dimension 128, pre-norm residual blocks, GELU activations, and scaled dot-product attention.
Parameter counts range from about 810K to 819K.
Training uses 1M characters with a 90/10 train/validation split, 256-character chunks, 5,000 iterations, batch size 64, AdamW with learning rate $3\times 10^{-4}$, cosine schedule, weight decay 0.1, and gradient clipping at 1.0.
Cross-entropy loss is computed at every token.
After training, validation perplexity is measured over all tokens in evaluation windows of 256, 512, 1024, 2048, and 4096 characters.

\noindent\textbf{Models.}
The continuous baseline uses standard RoPE positions that increment from $0$ to $T-1$ across the full context.
The ALiBi baseline adds a linear distance bias to attention scores, using slopes $2^{-2}$, $2^{-4}$, $2^{-6}$, and $2^{-8}$ across its four heads.
The random-line-address model and the content-addressed model use the same causal implementation and differ only in how they produce line addresses.
Both reset local RoPE positions at each newline.
The random-line-address model samples an independent angle vector $\xi_e\in\R^{d/2}$ for each completed line on every forward pass, with entries drawn i.i.d. from $\mathcal N(0,1)$.
This gives lines different addresses without using line content.
LISformer treats each newline-delimited line as an LIS instance.
Each character endpoint has the form $(x,(p,e))$, where $x$ is the character content, $p$ is the within-line position, and $e$ is the line instance.
The RoPE-style query/key address is generated from $(p,e)$ rather than from global positional coordinates.
Thus LISformer changes the rotary operator angles used in the attention score, not the token embeddings.
LISformer uses the same within-line reset and adds a content-derived line-instance address for completed lines.
For each completed line, token embeddings are first rotated by their within-line RoPE angles, scatter-added into a line sum, mean-pooled by line length, and then projected through LayerNorm and a linear map to an angle vector in $\R^{d/2}$.
That angle offset is added to the RoPE angles of every token in the completed line.
The final incomplete line receives zero line offset, so $R_e=I$ in both addressed models.
Every prediction uses only preceding text.
Within one line, the shared line offset cancels in relative attention angles, so intra-line attention sees ordinary local RoPE.

\begin{table}[ht]
\centering
\caption{All-token validation perplexity on Tiny Shakespeare after 5,000 training iterations on 256-character chunks. Content-derived line addresses remain approximately constant as evaluation context grows to 16 times the training length. Each row reports one run.}
\label{tab:tiny-shakespeare-lisformer}
\footnotesize
\setlength{\tabcolsep}{6pt}
\begin{tabular}{@{}lccccc@{}}
\toprule
\textbf{Model} & \textbf{256 chars} & \textbf{512 chars} & \textbf{1024 chars} & \textbf{2048 chars} & \textbf{4096 chars} \\
\midrule
Continuous RoPE & 4.70 & 5.99 & 8.23 & 10.71 & 13.36 \\
ALiBi & 4.96 & 4.94 & 4.96 & 4.93 & 4.93 \\
Random line address & 4.89 & 4.79 & 4.90 & 4.87 & 4.91 \\
LISformer (content address) & 4.81 & 4.74 & 4.85 & 4.83 & 4.86 \\
\bottomrule
\end{tabular}
\end{table}

\noindent\textbf{Interpretation.}
Continuous RoPE encodes line identity only through the growing global position and degrades as context exceeds the training length.
ALiBi, random line addresses, and content-derived line addresses produce approximately constant curves in these runs.
LISformer keeps within-line positions bounded while computing completed-line addresses from content.
Its all-token perplexity changes from 4.81 to 4.86 and is lower than the random-address model's at every evaluated context length in this run.
The table shows that this LIS text address can be implemented inside ordinary attention and has the intended behavior: adding context adds addressed text instances rather than increasing the local positional range.
A flat curve does not by itself show how much distant information the model uses.

\subsection{Timing Benchmark}\label{app:timing-benchmark}

We measured training-step overhead for the plain-text continuous-RoPE baseline (B) and LISformer (J) on an NVIDIA Tesla T4 GPU with 15.8~GB memory and compute capability 7.5, using PyTorch 2.6.0, CUDA 12.4, and Python 3.12.
These timings are not scale benchmarks. They measure the implementation overhead of computing content-derived line addresses in the same RoPE-style Transformer code path.
They are separate address-computation benchmarks, not training-step measurements of the corrected causal runs in \Cref{tab:tiny-shakespeare-lisformer}.

Each model was initialized with \texttt{torch.manual\_seed(42)} and dropout set to 0.0.
For each configuration, one fixed training batch was drawn with \texttt{get\_batch}.
We ran one warmup training step, then timed ten training steps, each consisting of forward pass, backward pass, optimizer update, and gradient zeroing, with \texttt{torch.cuda.synchronize()} around the timed region.
Between models, the model and optimizer were deleted and the CUDA cache was cleared.
Context length was fixed at 1024 for all configurations. Batch size was reduced at larger scales to fit GPU memory.
The optimizer was AdamW with learning rate $3\times 10^{-4}$, weight decay 0.1, cosine annealing, and gradient clipping at 1.0.

The models use multi-head rotary attention with full query, key, value, and output projections, implemented through \texttt{F.scaled\_dot\_product\_attention} with causal masking.
RoPE is applied per head to queries and keys after reshaping to $(B,H,T,D)$, using the standard frequency schedule $1/10000^{2i/d}$ for $i=0,\ldots,d/2-1$.
Each block is pre-norm residual attention followed by a GELU MLP with $4\times$ expansion.
The token embedding is followed by transformer blocks, final LayerNorm, and an output linear layer, with no weight tying and no learned position embedding.
LISformer adds vectorized line detection, scatter-add mean pooling over completed lines, and a line-angle projection \texttt{LayerNorm(d)} followed by \texttt{Linear(d,d/2)}.

\begin{table}[ht]
\centering
\caption{Training-step timing benchmark for the continuous-RoPE baseline (B) and LISformer (J). Each entry reports wall-clock milliseconds per training step, averaged over 10 timed steps after one warmup step, at context length 1024 on an NVIDIA Tesla T4.}
\label{tab:timing-benchmark}
\footnotesize
\setlength{\tabcolsep}{5pt}
\begin{tabular}{@{}lrrrrrr@{}}
\toprule
\textbf{Scale} & \textbf{$d$} & \textbf{Layers} & \textbf{Heads} & \textbf{Batch} & \textbf{B step (ms)} & \textbf{J overhead} \\
\midrule
0.8M & 128 & 4 & 4 & 8 & 42 & +10.3\% \\
85M & 768 & 12 & 12 & 4 & 702 & +4.4\% \\
302M & 1024 & 24 & 16 & 2 & 1215 & +3.5\% \\
680M & 1536 & 24 & 16 & 1 & 1428 & +2.0\% \\
\bottomrule
\end{tabular}
\end{table}

The LISformer additions are $O(Td)$ per layer for fixed hidden size: vectorized line detection, one within-line rotation before pooling, scatter-add mean pooling, and a line-angle projection.
Transformer training includes attention and MLP work of order $O(T^2d+Td^2)$, so in these timing runs the relative overhead decreases as model size grows.

\subsection{Role-Addressing KG/Text Diagnostics}\label{app:kgtext-diagnostics}

\noindent\textbf{Shallow synthetic diagnostic.}
We include a small diagnostic to check whether the LIS role-addressing mechanism helps in a shallow model.
The question is whether direct role/instance addresses improve entity prediction when the baseline represents the same roles with serialized relation tokens and positions.
We construct a 3-ary synthetic configuration: 3,000 synthetic lowercase names of length 2--8 arranged in familial, workplace, and military chains, rendered both as text templates and as KG-form records.
The baseline is a serialized relation-token model. LIS uses the same entity strings and prediction targets, but represents roles through direct role addresses rather than through relation tokens.
Both models are three-layer causal transformers trained with the same text+KG objective and evaluated only on entity prediction with the target role or relation already specified.
\Cref{tab:lis-synthetic} reports entity-level hit@5 and per-character perplexity (PPL) on the same target entity characters.
This diagnostic checks whether the LIS role-addressing difference remains visible when the serialized baseline learns nontrivially.

Entity strings are tokenized character by character, while role and relation markers are single special tokens.
This diagnostic uses the full 85-token vocabulary for the softmax, not just the lowercase entity-character subset. The zero-shot stress tests below use a larger 175-token vocabulary because they include a larger role-token and special-token inventory.
Evaluation scores entity prediction with the target role or relation already specified.
The top three tiers are memorization for facts seen in training, transfer for held-out derived facts whose base facts were seen, and generalization for held-out derived facts whose derivation must generalize. The label \texttt{kg\_excl} marks names seen only in KG-form training facts, and \texttt{txt\_excl} marks names seen only in text-form training facts.
The suffixes \texttt{mem} and \texttt{gen} distinguish memorization-style and generalization-style queries within the exclusive-name splits.
For hit@1/hit@5, a target entity string counts as correct only if every character of the entity is among the model's top one/top five candidates at its prediction step. Scores are averaged over target entities.
Perplexity (PPL) is computed on the same target entity characters as a per-character geometric mean.

\begin{table}[ht]
\centering
\caption{Synthetic LIS diagnostic: serialized relation-token baseline versus LIS, using embedding dimension 500, three layers, and 3,000 synthetic names.}
\label{tab:lis-synthetic}
\footnotesize
\setlength{\tabcolsep}{6pt}
\textbf{Text evaluation}

\begin{tabular}{@{}lcccc@{}}
\toprule
\textbf{Tier} & \textbf{Serialized h@5} & \textbf{LIS h@5} & \textbf{Serialized PPL} & \textbf{LIS PPL} \\
\midrule
memorization & .259 & .445 & 3.59 & 3.10 \\
transfer & .269 & .440 & 3.47 & 3.08 \\
generalization & .248 & .401 & 3.83 & 3.48 \\
kg\_excl\_mem & .184 & .329 & 4.68 & 3.97 \\
kg\_excl\_gen & .189 & .323 & 4.92 & 4.18 \\
txt\_excl\_mem & .119 & .202 & 5.10 & 4.62 \\
txt\_excl\_gen & .107 & .163 & 5.23 & 4.98 \\
\bottomrule
\end{tabular}

\vspace{0.5em}
\textbf{KG evaluation}

\begin{tabular}{@{}lcccc@{}}
\toprule
\textbf{Tier} & \textbf{Serialized h@5} & \textbf{LIS h@5} & \textbf{Serialized PPL} & \textbf{LIS PPL} \\
\midrule
memorization & .288 & .506 & 3.19 & 2.73 \\
transfer & .296 & .510 & 3.13 & 2.73 \\
generalization & .280 & .475 & 3.27 & 2.87 \\
kg\_excl\_mem & .231 & .418 & 3.44 & 3.01 \\
kg\_excl\_gen & .230 & .413 & 3.63 & 3.14 \\
txt\_excl\_mem & .111 & .190 & 5.94 & 5.13 \\
txt\_excl\_gen & .104 & .168 & 5.82 & 5.46 \\
\bottomrule
\end{tabular}
\end{table}

The LIS model has higher h@5 and lower PPL than the serialized baseline in every tier for both text and KG evaluation.
The differences are largest on memorization, transfer, generalization, and KG-exclusive tiers, while text-exclusive tiers show the same direction.
On the main memorization, transfer, and generalization tiers, serialized h@5 is .248--.296 while LIS h@5 is .401--.510.
This gives a role-addressing comparison in a setting where the serialized baseline is substantially nonzero.

We make the causal setup fair to the serialized baseline.
Both models see all role permutations used by the causal prediction task, so every participant can appear after every other participant in some training string. Role markers move with the entities they mark in each serialized string.
The architecture, optimizer, and entity-scoring protocol belong to the same family as the zero-shot diagnostics, except that this diagnostic uses embedding dimension 500 and 3,000 synthetic names.
The data format is different: here KG-form and text-form facts are trained as independent serialized sentences rather than as paired KG$\mid$text lines of the form used in the zero-shot diagnostics below, which is why the main text describes this table as independent KG/text transfer.

\noindent\textbf{Mixed text--KG zero-shot variant.}
We also test whether structural KG encodings transfer to completely unseen entity names.
We compare two character-level transformer architectures on serialized KG/text pairs.
These diagnostics test whether role addresses preserve the KG/text correspondence when entity names are unseen.
All reported PPL values are character-level PPL computed only over target entity-name characters, not over the full serialized sequence.
The task is not plain string copying.
The model conditions on one KG/text format and generates another, so unseen-name success requires using the learned role correspondence to generate the target entity character by character.
The two zero-shot tables below are stress tests for the address mechanism.
The serialized baseline uses standard RoPE positional encoding. KG facts are linearized into text with explicit role tokens, e.g. \texttt{<son> adam <father> brian}, and both KG and text are processed as ordinary token sequences with position-based angles.
The LIS/slot-angle model augments RoPE with learned per-relation angular offsets for KG entity slots.
Each of the 57 relation types is represented over four canonical chain slots, e.g. a four-position window such as son, father, grandfather, great-grandfather in the family domain.
A 2-ary or 3-ary fact uses the two or three slots that participate in that relation.
This is the experimental parameterization of the role operator $R_s$: each canonical slot stores one learned phase per rotary frequency block, giving a block-diagonal rotation factor that enters the attention address before token mixing.
Its KG side is still a serialized entity string, but with space-delimited entities and no role tokens. The model identifies each entity's role through its angular offset rather than through explicit role tokens.
These zero-shot LIS variants use the no-separate-instance-factor case of \Cref{prop:lis-special-cases}: they test role addresses, while content-derived fact addresses are introduced in the template-diverse diagnostic below.
For both causal models, KG entity-role units are randomly permuted during training. This is not a no-role baseline because the serialized model receives explicit role-marker content in every permutation.
Natural-language tokens receive no learned relation or slot angle. Relation information on that side comes from the text/template words.

\noindent\textbf{Architecture and training.}
Both models are single-head character-level transformers with pre-norm residual blocks: LayerNorm, rotary attention, LayerNorm, and a ReLU feed-forward network with $4\times$ expansion.
The attention layer applies RoPE rotation to queries and keys before scaled dot-product attention with softmax.
The zero-shot variants use a 175-token vocabulary: lowercase letters, space, period, apostrophe, digits, role tokens for the serialized baseline, and special tokens.
Models are trained on paired serialized strings KG $\mid$ text, where the KG part encodes a relational fact and the text part expresses the same fact in natural language.
Training uses causal language-modeling loss over the full sequence, AdamW with learning rate $5\times 10^{-4}$, gradient clipping at 1.0, batch size 32, dropout 0.2, and 100K training updates.
Both zero-shot variants in \Cref{tab:mixed-zero-shot,tab:bidirectional-zero-shot} use embedding dimension 60, three layers, block size 192, 1000 training names, and 100 test names.
The LIS model adds learned slot-angle parameters for relation slots. The remaining architecture and training protocol are the same.
With the stated architecture, the zero-shot serialized baseline has about 153K trainable parameters and the LIS variant about 160K. The difference is the 6,840-parameter slot-angle table, $57$ relation types $\times$ $4$ canonical slots $\times$ $30$ rotary frequency blocks.
For the 3,000-name diagnostic in \Cref{tab:lis-synthetic}, embedding dimension 500 gives about 9.106M parameters for the serialized baseline and 9.163M for LIS. The slot-angle table has 57,000 parameters.
The reported diagnostics are single-seed runs. Seed sweeps are left to the larger empirical study noted in \Cref{sec:conclusion}.

\noindent\textbf{Data and evaluation.}
The diagnostics use disjoint random strings of 2--8 lowercase characters.
Names are arranged into chains in three domains: family, work, and military.
Each chain position gives facts at three granularities: base 2-ary facts such as \texttt{son\_of} and \texttt{father\_of}, base 3-ary facts such as \texttt{grandson\_of} with an intermediate entity, and derived 2-ary facts such as \texttt{great\_grandson\_of} spanning four chain positions.
In total there are 57 distinct relation types across the three domains, with arities 2 and 3.
At evaluation, the conditioned serialized side contains test names unseen during training, and the model autoregressively predicts the other serialized side.
Each entity in the fact is predicted separately, given the others in the text template.
For LIS, evaluation uses the same slot-angle mechanism as training, with the text entity length growing by one at each autoregressive step.
The zero-shot PPL values are character-level PPL computed only on the target entity-name characters.
Because the test names are disjoint random strings, a trained model can give the correct unseen name very low probability while confidently predicting training-name characters. PPL can therefore be worse than uniform even though the evaluation contains no template tokens.

\noindent\textbf{Experiment 1: pure KG$\to$text training.}
\Cref{tab:mixed-zero-shot} reports the pure KG$\to$text variant: training and evaluation condition on the KG encoding and predict the corresponding text.
The LIS model has much higher unseen-name scores than the serialized baseline in this stress test.
Aggregated over fact types, LIS reaches 1.000 hit@5 and 1.08 PPL on train names, and .398 hit@5 and 6.56 PPL on unseen test names.
The serialized baseline reaches .126 hit@5 and 5.36 PPL on train names, but zero hit@5 and 545 PPL on test names.
This small-width setting is intentionally a harsher stress test than \Cref{tab:lis-synthetic}: the serialized baseline underfits even the training names, so we report it as a stress test of the addressing mechanism, not as a tuned comparison.

\begin{table}[ht]
\centering
\caption{Pure KG$\to$text zero-shot diagnostic. Models use embedding dimension 60, three layers, block size 192, 100K updates, 1000 training names, and 100 unseen test names. Text and KG facts are placed in the same line during training. Evaluation conditions on KG and predicts text.}
\label{tab:mixed-zero-shot}
\scriptsize
\setlength{\tabcolsep}{4pt}
\textbf{By fact type}

\begin{tabular}{@{}llcccccc@{}}
\toprule
\textbf{Fact type} & \textbf{Name set} & \textbf{Serialized h@1} & \textbf{Serialized h@5} & \textbf{Serialized PPL} & \textbf{LIS h@1} & \textbf{LIS h@5} & \textbf{LIS PPL} \\
\midrule
base\_2ary & train & .002 & .130 & 5.38 & .951 & 1.000 & 1.08 \\
base\_2ary & test & .000 & .000 & 577 & .182 & .397 & 6.44 \\
base\_3ary & train & .001 & .107 & 5.39 & .970 & 1.000 & 1.06 \\
base\_3ary & test & .000 & .000 & 515 & .185 & .413 & 6.44 \\
derived & train & .003 & .140 & 5.29 & .912 & 1.000 & 1.10 \\
derived & test & .000 & .000 & 544 & .165 & .385 & 6.80 \\
\bottomrule
\end{tabular}

\vspace{0.5em}
\textbf{Aggregated}

\begin{tabular}{@{}lcccccc@{}}
\toprule
\textbf{Name set} & \textbf{Serialized h@1} & \textbf{Serialized h@5} & \textbf{Serialized PPL} & \textbf{LIS h@1} & \textbf{LIS h@5} & \textbf{LIS PPL} \\
\midrule
train & .002 & .126 & 5.36 & .944 & 1.000 & 1.08 \\
test & .000 & .000 & 545 & .177 & .398 & 6.56 \\
\bottomrule
\end{tabular}
\end{table}

\noindent\textbf{Experiment 2: bidirectional training.}
We next train with 50/50 KG-first and text-first sequences, evaluating in both directions in \Cref{tab:bidirectional-zero-shot}.
The LIS model again has higher h@5 and lower PPL than the serialized baseline across all conditions.
In the KG$\to$text direction, LIS achieves .999 hit@5 versus .089 for the serialized baseline on train names, and .301 hit@5 versus .000 on unseen test names.
In the text$\to$KG direction, where the model predicts native KG entity names given text, LIS achieves .995 hit@5 versus .088 for the serialized baseline on train names, and .360 hit@5 versus .000 on unseen test names.
The h@1 and PPL columns show the same pattern.

\begin{table}[ht]
\centering
\caption{Bidirectional mixed text--KG zero-shot diagnostic. Models use embedding dimension 60, three layers, block size 192, 100K updates, 1000 training names, and 100 unseen test names. Models are trained and evaluated in both KG$\to$text and text$\to$KG directions.}
\label{tab:bidirectional-zero-shot}
\scriptsize
\setlength{\tabcolsep}{4pt}
\textbf{KG$\to$text}

\begin{tabular}{@{}llcccccc@{}}
\toprule
\textbf{Fact type} & \textbf{Name set} & \textbf{Serialized h@1} & \textbf{Serialized h@5} & \textbf{Serialized PPL} & \textbf{LIS h@1} & \textbf{LIS h@5} & \textbf{LIS PPL} \\
\midrule
base\_2ary & train & .000 & .097 & 5.50 & .927 & 1.000 & 1.14 \\
base\_2ary & test & .000 & .000 & 516 & .148 & .310 & 10.37 \\
base\_3ary & train & .000 & .090 & 5.46 & .928 & .999 & 1.14 \\
base\_3ary & test & .000 & .000 & 517 & .143 & .307 & 10.34 \\
derived & train & .000 & .080 & 5.33 & .919 & .999 & 1.16 \\
derived & test & .000 & .000 & 506 & .115 & .287 & 12.97 \\
\bottomrule
\end{tabular}

\vspace{0.5em}
\textbf{Text$\to$KG}

\begin{tabular}{@{}llcccccc@{}}
\toprule
\textbf{Fact type} & \textbf{Name set} & \textbf{Serialized h@1} & \textbf{Serialized h@5} & \textbf{Serialized PPL} & \textbf{LIS h@1} & \textbf{LIS h@5} & \textbf{LIS PPL} \\
\midrule
base\_2ary & train & .001 & .088 & 5.29 & .882 & .995 & 1.16 \\
base\_2ary & test & .000 & .000 & 490 & .125 & .362 & 9.19 \\
base\_3ary & train & .001 & .093 & 5.22 & .871 & .995 & 1.15 \\
base\_3ary & test & .000 & .000 & 495 & .131 & .367 & 9.14 \\
derived & train & .000 & .084 & 5.25 & .873 & .996 & 1.17 \\
derived & test & .000 & .000 & 489 & .122 & .352 & 10.39 \\
\bottomrule
\end{tabular}

\vspace{0.5em}
\textbf{Aggregated}

\begin{tabular}{@{}llcccccc@{}}
\toprule
\textbf{Direction} & \textbf{Name set} & \textbf{Serialized h@1} & \textbf{Serialized h@5} & \textbf{Serialized PPL} & \textbf{LIS h@1} & \textbf{LIS h@5} & \textbf{LIS PPL} \\
\midrule
KG$\to$text & train & .000 & .089 & 5.43 & .925 & .999 & 1.15 \\
KG$\to$text & test & .000 & .000 & 513 & .135 & .301 & 11.23 \\
text$\to$KG & train & .001 & .088 & 5.25 & .875 & .995 & 1.16 \\
text$\to$KG & test & .000 & .000 & 491 & .126 & .360 & 9.57 \\
\bottomrule
\end{tabular}
\end{table}

\subsection{Template-Diverse KG-to-Text with Content-Derived Fact Addresses}\label{app:v3-template-diagnostic}

This diagnostic tests whether a small character-level Transformer can map structured KG facts to natural-language text while generalizing to unseen entity names.
It also gives a small-scale test of content-derived fact addressing.
Both models are still ordinary causal Transformers over one token sequence: the KG prefix and the text suffix form a single token sequence separated by \texttt{|}.
The difference is how token addresses are assigned inside the attention score.
The serialized baseline uses standard RoPE positions over the whole sequence.
The LIS model uses the same causal attention computation, but each KG entity-token address combines three factors throughout this diagnostic: a within-entity position rotation, a relation-slot rotation, and a content-derived fact-address rotation.
Training computes this fact address for the single KG fact in the example. Multi-fact evaluation applies the same learned rule separately to each fact in the prefix.
Thus the model attends over KG tokens and text tokens in one pass, while the pairwise score $q_i^\top P_{j\to i}k_j$ uses addresses generated from KG structure rather than addresses generated only from sequence position.

\noindent\textbf{Domains and relations.}
The data contain five domains, each with six ordered roles:
family (great-grandson, grandson, son, father, grandfather, great-grandfather),
work (intern, employee, boss, superboss, director, executive),
military (recruit, private, sergeant, captain, major, colonel),
school (freshman, sophomore, junior, senior, graduate, alumnus), and
church (acolyte, deacon, priest, bishop, archbishop, cardinal).
Each domain receives a random permutation of all 3,000 training names as a chain.
Consecutive pairs generate 25 chain2 relations, and consecutive triples generate 20 chain3 relations.
For example, in the family domain, a chain2 fact may relate \texttt{son} and \texttt{father}, while a chain3 fact may relate \texttt{grandson}, \texttt{son}, and \texttt{father}.
Chain3 is harder because the model predicts two entities instead of one and has six given/prediction permutations rather than two.
The resulting training set has 134,935 KG facts and 6.1M text sentences.

\noindent\textbf{Template diversity.}
With only one text template, a model can lean heavily on a single surface pattern.
Here each fact is paired with one of twelve sentence templates, including
\texttt{if adam is the son, the father is brian.}
and \texttt{in the family, if adam is the son, then brian is the father.}
There are 24 text variants per chain2 relation and 72 per chain3 relation.
The template words are largely predictable from the chosen template. The difficult part is the variable binding problem: mapping the KG roles to the correct text roles and spelling the corresponding entity names.

\noindent\textbf{Model formats.}
The serialized baseline sees the KG fact as a text string with role tokens, for example
\[
\texttt{<son> adam <father> brian | if adam is the son, the father is brian.}
\]
The entity order in the KG prefix is randomly shuffled each time, so the baseline cannot infer roles from KG order alone.
Standard RoPE positions run continuously across the KG prefix, separator, and text suffix.

The LIS model stores the KG side as native character tokens without role-marker tokens, for example
\[
\texttt{ adam  brian | if adam is the son, the father is brian.}
\]
Role identity is encoded by learned rotary offsets.
For a KG token in a relation slot, the angle is
\[
\texttt{position\_in\_slot}\cdot \texttt{base\_freq}
+ \texttt{slot\_angle}[\texttt{relation},\texttt{role}],
\]
where \texttt{position\_in\_slot} resets to zero for each entity.
The natural-language portion receives no learned relation or slot angle.
Both models share the same text prediction head and causal language-modeling objective.
The serialized baseline has about 151K parameters and the LIS model about 158K parameters. These counts are slightly lower than the zero-shot runs because this diagnostic uses a smaller vocabulary and role-token inventory.

\noindent\textbf{Multi-fact prefix evaluation and content-derived fact addresses.}
Training examples contain one paired KG fact and its corresponding text sentence.
At evaluation, the text sentence is still paired with its true KG fact, but the prefix may also contain four distractor KG facts before the text.
This is an intentional train/eval shift: the target relation is still present, but the model must identify it inside a longer KG prefix than it saw during training.
It tests what happens when more KG facts are placed in context than the training format used.
The single-fact setting contains one KG fact.
The \texttt{same\_entities} setting (distractors share entities but differ in relation) contains the true KG fact plus four distractors, so the model must select the KG fact whose relation is expressed by the text sentence.
The \texttt{same\_rel} setting (distractors share the relation but differ in entities) contains the true KG fact plus four distractors, so the model must bind the given text entity to the correct fact.
In both multi-fact settings, the four distractors are sampled uniformly from the qualifying facts for that setting.

The train/test address split is explicit here.
For the serialized baseline, fact identity is represented by where each fact appears in the concatenated prefix, together with separator tokens and standard positions.
Because training uses a single fact, adding distractor facts at evaluation creates fact ranges and relative offsets that were not part of the training format.
As more KG facts are placed before the text, those position-defined fact identities can move farther from the coordinate ranges seen during training.
For LIS, during both training and evaluation, each fact receives a content-derived fact address, implemented as an angle offset: rotate each KG entity token by its slot angle, mean-pool the rotated embeddings within the fact, project through LayerNorm and a linear map, and add the resulting fact-specific angle to all tokens in that fact.
This changes the rotary operator angles for KG tokens in that fact, so fact identity enters the pairwise attention operator rather than the token embedding stream.
Adding facts adds endpoints under the same addressing scheme. Fact identity never depends on a new coordinate range.
The remaining cost is competition among more attended endpoints.

\noindent\textbf{Training and evaluation.}
Training uses 3,000 random lowercase names of length 2--8 and 100 disjoint test names.
Each training example pairs one KG fact with one randomly chosen text variant and is trained causally on the full sequence.
Both models use embedding dimension 60, three layers, softmax attention, batch size 32, learning rate $5\times 10^{-4}$, and 100K training iterations.
At evaluation, template words are teacher-forced and only entity-name characters in the text are scored.
Hit@1 requires every scored character to be the argmax prediction. Hit@5 requires every scored character to be in the top five predictions. PPL is the geometric mean perplexity over scored characters only.

\Cref{tab:v3-single-fact} reports the single-fact setting.
In that setting, LIS has higher hit rates and lower PPL than the serialized baseline across the reported rows.
On chain2, LIS has a much smaller train/test gap (.972 test hit@5 versus 1.000 train), while the serialized baseline falls from .907 train hit@5 to .359 on unseen names.
On chain3, LIS reaches .923 test hit@5, while the serialized baseline falls to .126.

\begin{table}[ht]
\centering
\caption{Template-diverse KG$\to$text diagnostic, single-fact setting. The model conditions on one KG fact and predicts a text realization. Scores are over target entity-name characters only.}
\label{tab:v3-single-fact}
\scriptsize
\setlength{\tabcolsep}{4pt}
\resizebox{\columnwidth}{!}{%
\begin{tabular}{@{}llcccccc@{}}
\toprule
\textbf{Fact type} & \textbf{Set} & \textbf{Serialized h@1} & \textbf{Serialized h@5} & \textbf{Serialized PPL} & \textbf{LIS h@1} & \textbf{LIS h@5} & \textbf{LIS PPL} \\
\midrule
chain2 & train & .224 & .907 & 2.28 & .507 & 1.000 & 1.30 \\
chain2 & test  & .095 & .359 & 6.22 & .463 & .972 & 1.37 \\
chain3 & train & .011 & .743 & 2.75 & .063 & .995 & 1.46 \\
chain3 & test  & .002 & .126 & 8.64 & .050 & .923 & 1.57 \\
\bottomrule
\end{tabular}
}
\end{table}

\Cref{tab:v3-same-entities} reports the same-entities multi-fact prefix setting.
The distractors contain the same entity names as the true fact but under different relations.
The task therefore requires selecting the KG relation expressed by the text sentence.
LIS remains strong, especially on chain3 (.835 test hit@5), while the serialized baseline is near zero on chain3 test hit@5.

\begin{table}[ht]
\centering
\caption{Template-diverse KG$\to$text diagnostic, same-entities multi-fact prefix setting. The KG prefix contains the true fact and four distractors with the same entities but different relations.}
\label{tab:v3-same-entities}
\scriptsize
\setlength{\tabcolsep}{4pt}
\resizebox{\columnwidth}{!}{%
\begin{tabular}{@{}llcccccc@{}}
\toprule
\textbf{Fact type} & \textbf{Set} & \textbf{Serialized h@1} & \textbf{Serialized h@5} & \textbf{Serialized PPL} & \textbf{LIS h@1} & \textbf{LIS h@5} & \textbf{LIS PPL} \\
\midrule
chain2 & train & .215 & .853 & 2.58 & .483 & 1.000 & 1.39 \\
chain2 & test  & .133 & .411 & 6.23 & .388 & .952 & 1.57 \\
chain3 & train & .000 & .126 & 15.37 & .044 & .962 & 1.64 \\
chain3 & test  & .000 & .030 & 29.83 & .016 & .835 & 1.98 \\
\bottomrule
\end{tabular}
}
\end{table}

\Cref{tab:v3-same-rel} is the hardest multi-fact prefix setting.
All distractors have the same relation as the true fact, so the model must bind the given entity in the text to the correct KG fact.
Neither model solves this setting.
The serialized baseline slightly edges LIS on chain2 train h@5, and LIS is substantially better on chain3 PPL (18.24 versus 101.94 on test), but the low hit rates show that entity binding among same-relation distractors remains the main unresolved difficulty in this diagnostic.

\begin{table}[ht]
\centering
\caption{Template-diverse KG$\to$text diagnostic, same-relation multi-fact prefix setting. The KG prefix contains the true fact and four distractors with the same relation but different entities.}
\label{tab:v3-same-rel}
\scriptsize
\setlength{\tabcolsep}{4pt}
\resizebox{\columnwidth}{!}{%
\begin{tabular}{@{}llcccccc@{}}
\toprule
\textbf{Fact type} & \textbf{Set} & \textbf{Serialized h@1} & \textbf{Serialized h@5} & \textbf{Serialized PPL} & \textbf{LIS h@1} & \textbf{LIS h@5} & \textbf{LIS PPL} \\
\midrule
chain2 & train & .025 & .368 & 6.72 & .050 & .342 & 8.29 \\
chain2 & test  & .009 & .109 & 20.49 & .023 & .182 & 17.19 \\
chain3 & train & .000 & .005 & 51.44 & .001 & .112 & 8.58 \\
chain3 & test  & .000 & .000 & 101.94 & .001 & .038 & 18.24 \\
\bottomrule
\end{tabular}
}
\end{table}

\noindent\textbf{Serialized content-addressing control.}
We also run a control $B_{\ell}$ that uses the same serialized KG format as the baseline: facts are written as relation-marked entity strings in one flat causal sequence.
The only change is the rotary address.
The baseline $B$ uses standard continuous RoPE positions through the KG prefix and text suffix.
The $B_{\ell}$ control resets RoPE positions at segment boundaries, where a segment is one serialized KG fact or the natural-language text portion.
Each completed segment receives a content-derived angle offset: token embeddings are rotated by their within-segment RoPE angles, scatter-added by segment, mean-pooled, and projected through LayerNorm and a linear map to produce an angle offset.
This offset is added to all tokens in the segment. The final text segment uses zero offset because its content is incomplete during generation.
Within a segment, the shared offset cancels in relative angles, while cross-segment attention can use the offset to distinguish segments by content.
Thus $B$ and $B_{\ell}$ see the same data in the same format. The $B_{\ell}$ control tests whether content-derived segment addressing alone closes the gap to native LIS role/slot addresses.

\begin{table}[ht]
\centering
\caption{Template-diverse KG$\to$text diagnostic with a serialized content-addressing control. $B_{\ell}$ uses the same serialized input format as $B$, but resets positions by segment and adds a content-derived segment angle. $J$ is the native LIS model with role/slot addresses and content-derived fact addresses.}
\label{tab:v3-serialized-content-control}
\scriptsize
\setlength{\tabcolsep}{3pt}
\resizebox{\columnwidth}{!}{%
\begin{tabular}{@{}lllcccccc@{}}
\toprule
\textbf{Setting} & \textbf{Fact type} & \textbf{Set} & \textbf{$B$ h@5} & \textbf{$B_{\ell}$ h@5} & \textbf{$J$ h@5} & \textbf{$B$ PPL} & \textbf{$B_{\ell}$ PPL} & \textbf{$J$ PPL} \\
\midrule
single-fact & chain2 & train & .907 & .970 & 1.000 & 2.28 & 1.42 & 1.30 \\
single-fact & chain2 & test  & .359 & .925 & .972 & 6.22 & 1.58 & 1.37 \\
single-fact & chain3 & train & .743 & .994 & .995 & 2.75 & 1.50 & 1.46 \\
single-fact & chain3 & test  & .126 & .912 & .923 & 8.64 & 1.66 & 1.57 \\
\midrule
same-entities & chain2 & train & .853 & .973 & 1.000 & 2.58 & 1.91 & 1.39 \\
same-entities & chain2 & test  & .411 & .872 & .952 & 6.23 & 2.35 & 1.57 \\
same-entities & chain3 & train & .126 & .714 & .962 & 15.37 & 2.50 & 1.64 \\
same-entities & chain3 & test  & .030 & .525 & .835 & 29.83 & 3.35 & 1.98 \\
\midrule
same-relation & chain2 & train & .368 & .163 & .342 & 6.72 & 18.53 & 8.29 \\
same-relation & chain2 & test  & .109 & .122 & .182 & 20.49 & 41.08 & 17.19 \\
same-relation & chain3 & train & .005 & .032 & .112 & 51.44 & 28.03 & 8.58 \\
same-relation & chain3 & test  & .000 & .028 & .038 & 101.94 & 56.09 & 18.24 \\
\bottomrule
\end{tabular}
}
\end{table}

The control shows that content-derived segment addresses strongly improve the serialized model: for example, single-fact chain2 test h@5 rises from .359 to .925.
They do not close the gap to native LIS role/slot addressing.
Relative to $B_{\ell}$, $J$ has higher h@5 and lower PPL in every reported row.
Relative to the original serialized baseline, $J$ is not uniformly better in same-relation chain2 train, but the low same-relation hit rates show that this remains the hardest entity-binding setting.

Template diversity makes the difference between the two addressing schemes clearer than a single template would.
This explains why the serialized baseline scores near zero in the 1,000-name zero-shot tests but partially learns here: 3,000 names and twelve templates give it enough variable diversity to fit parts of the single-fact task, so the remaining multi-fact prefix gap is not just a failure to learn the data format.
The serialized baseline can fit parts of the single-fact chain2 task, but it has a large train/test gap and degrades sharply when fact identity must be recovered from a longer KG prefix.
The LIS model keeps role and fact information in the address operator and has much stronger unseen-name scores in the single-fact and same-entities settings.
The same-relation setting shows the remaining hard case: content-derived fact addresses improve some scores, but small three-layer models still struggle when several candidate facts share the same relation and must be separated by entity binding alone.

\section{Additional Related Work}\label{app:additional-related-work}

\noindent\textbf{Contextual position encoding.}
CoPE computes positions inside attention by a content-dependent gate: the position of one token relative to another is a learned count of intervening tokens that matter for the current query~\citep{golovneva2024cope}.
This is close to our diagnosis because the address input is no longer just the raw token position. A model can learn to count words, sentences, or other context-selected units.
The difference is that CoPE still produces a position count and then uses a position embedding for that count.
LISformer gives each completed line an instance operator rather than a larger position count.
Context growth then adds instances instead of larger coordinate values.

\noindent\textbf{Randomized positions.}
Randomized positional encodings train on randomized positions from a larger range to reduce the train/test mismatch of unseen absolute positions~\citep{ruoss2023randomized}.
Our random-line-address control is related in spirit but different in mechanism: completed text units receive independent random instance offsets on each forward pass, drawn from the same distribution at train and test.
The random-address control remains approximately constant in this setup. The LISformer row, which uses content-derived line addresses, has lower all-token PPL than the random-address control at every evaluated context length in this run.

\noindent\textbf{KG, hypergraph, and graph encodings.}
The observation that pairwise facts need shared instance identity is close to classical lossless-join and reification ideas in databases and semantic-web modeling~\citep{fagin1977multivalued,abiteboul1995foundations,noy2006defining}.
N-ary and hyper-relational KG embeddings model higher-arity facts directly for KG scoring and link prediction, including m-TransH~\citep{wen2016beyondbinary}, NaLP~\citep{guan2019nalp}, HINGE~\citep{rosso2020hinge}, HypE/HSimplE~\citep{fatemi2020hype}, and ReAlE~\citep{fatemi2023real}.
HGNN~\citep{feng2019hypergraph}, HyperGT~\citep{Liu2024HyperGT}, HGT~\citep{Hu2020HGT}, Graphormer~\citep{ying2021graphormer}, TokenGT~\citep{kim2022tokengt}, and Laplacian/random-walk graph positional encodings~\citep{dwivedi2023benchmarking} encode graph structure for graph learning.
Relational GCNs and KG-language models such as R-GCN, KG-BERT, and CoLAKE also inject graph or KG structure through graph message passing, serialized triples, or entity-aware pretraining~\citep{schlichtkrull2018rgcn,yao2019kgbert,sun2020colake}.
In particular, TokenGT turns graph nodes and edges into tokens and attaches structural identity through identifier embeddings.
LIS also tokenizes structural endpoints, but role and instance identity enter through journey operators inside the pairwise attention score rather than only as additive identifiers.

\noindent\textbf{Geometric operator views.}
Gauge-equivariant geometric models use local frames and parallel transport on manifolds or meshes~\citep{cohen2019gauge,dehaan2021gauge,he2021gauge}. LIS uses the same broad operator-product idea for role-labeled data.
The address-factorization result is the role/instance version of a standard relative-measurement question~\citep{singer2011angular}: when do local relative measurements determine global addresses, and what freedom remains in choosing the reference frame?

\section*{AI Use Statement}
The authors used AI assistants for language editing, LaTeX assistance, and feedback on manuscript presentation.
All mathematical claims, proofs, citations, and final text were checked and approved by the authors, who take responsibility for the submission.


\end{document}